\documentclass[twoside]{article} 
\usepackage[accepted]{aistats2021}
\usepackage{hyperref}
\usepackage{amsmath, amssymb, amsthm,amsbsy, amscd, bm}
\usepackage{graphicx}
\graphicspath{{figs/}}
\usepackage{subfigure}
\usepackage{algorithm}
\usepackage{algorithmicx}
\usepackage{booktabs}
\usepackage[noend]{algpseudocode}
\usepackage{footnote}

\numberwithin{equation}{section}
\DeclareMathOperator*{\plim}{plim}

\newcommand{\E}{\mathbb{E}}
\DeclareMathOperator{\prox}{\textup{prox}}

\DeclareMathOperator{\sgn}{sign}

\newcommand{\R}{\mathbb{R}}

\newcommand{\abs}[1]{\lvert#1\rvert}

\renewcommand{\mathbf}{\bm}

\newcommand{\bet}{\boldsymbol{\beta}}
\newcommand{\bethat}{\bm{\hat\beta}}
\newcommand{\p}{p}
\newcommand{\Eta}{\bm\eta}
\newcommand{\n}{n}

\newcommand{\B}{\mathbf{B}}
\newcommand{\Z}{\mathbf{Z}}
\newcommand{\X}{\mathbf{X}}
\newcommand{\w}{\mathbf{w}}
\newcommand{\y}{\mathbf{y}}

\newcommand{\blam}{\boldsymbol \lambda}
\newcommand{\thet}{\boldsymbol \theta}
\newcommand{\z}{\mathbf{z}}

\newcommand{\bfalph}{\boldsymbol \alpha}

\newcommand{\MSE}{\texttt{MSE}}

\newtheorem{theorem}{Theorem}
\newtheorem{othertheorem}{othertheorem}[section]

\newtheorem{fact}[othertheorem]{Fact}

\begin{document}

\twocolumn[
\aistatstitle{Efficient Designs of SLOPE Penalty Sequences in Finite Dimension}

\aistatsauthor{Yiliang Zhang$^*$ \And Zhiqi Bu$^*$}

\aistatsaddress{ University of Pennsylvania} ]

\begin{abstract}
In linear regression, SLOPE is a new convex analysis method that generalizes the Lasso via the sorted $\ell_1$ penalty: larger fitted coefficients are penalized more heavily. This magnitude-dependent regularization requires an input of penalty sequence $\blam$, instead of a scalar penalty as in the Lasso case, thus making the design extremely expensive in computation. In this paper, we propose two efficient algorithms to design the possibly high-dimensional SLOPE penalty, in order to minimize the mean squared error. For Gaussian data matrices, we propose a first order Projected Gradient Descent (PGD) under the Approximate Message Passing regime. For general data matrices, we present a zero-th order Coordinate Descent (CD) to design a sub-class of SLOPE, referred to as the $k$-level SLOPE. Our CD allows a useful trade-off between the accuracy and the computation speed. We demonstrate the performance of SLOPE with our designs via extensive experiments on synthetic data and real-world datasets.
\end{abstract}

\section{Introduction}
In sparse linear regression, we aim to find an accurate sparse estimator $\hat{\bm\beta}$ of the unknown truth $\bm\beta$ from
$$\y=\X\bet+\w.$$
Here the response $\y\in\R^\n$, the data matrix $\X\in\R^{\n\times \p}$, the true parameter $\bet\in\R^\p$ and the noise $\w\in\R^\n$. Specifically, in high dimension where $\p>\n$, ordinary linear regression fails to find a unique solution and $L_1$-related regularization is usually introduced to achieve sparse estimators, including the Lasso \cite{tibshirani1996regression}, elastic net \cite{zou2005regularization}, (sparse) group Lasso \cite{yuan2006model}, adaptive Lasso \cite{zou2006adaptive} and the recent SLOPE \cite{bogdan2015slope}:
\begin{align}
  \widehat{\boldsymbol{\beta}}(\blam)=\underset{\boldsymbol{b}}{\arg \min } \frac{1}{2}\|\boldsymbol{y}-\boldsymbol{X} \boldsymbol{b}\|_{2}^{2}+\sum_{i=1}^{p} \lambda_{i}|b|_{(i)} 
\label{eq:SLOPE}
\end{align}
Here $\sum_{i=1}^{p} \lambda_{i}|b|_{(i)}$ is the \textit{sorted $\ell_1$ norm} of $\bm b$ governed by the penalty vector $\blam\in\R^\p$ with $\lambda_{1} \geq \cdots \geq \lambda_{p}\geq 0$, and $\abs b_{(i)}$ is the ordered statistics of absolute values $\abs{b_i}$ such that $|b|_{(1)} \geq \cdots \geq |b|_{(p)}$.
We pause here to remark that the Lasso is a sub-case of SLOPE when $\lambda_1=\cdots=\lambda_p$, since there is no need to sort and hence sorted $\ell_1$ norm is simply the $\ell_1$ norm. Generally speaking, the sorting step in the norm allows SLOPE to work in a way similar to the taxation, assigning larger thresholds to larger fitted coefficients.

Many desirable properties have been proven for SLOPE. For example, SLOPE is a convex optimization that can be solved by existing gradient methods, such as the subgradient descent and the proximal gradient descent; SLOPE achieves minimax estimation properties without requiring knowledge of the sparsity degree of $\bet$ \cite{su2016slope}; SLOPE controls the false discovery rate in the case of independent predictors. However, understanding the SLOPE problem is difficult. Questions such as what posterior distribution does SLOPE solution follow, can we characterize statistics (e.g. the false discovery rate and true positive rate) from SLOPE exactly, whether SLOPE has better estimation error than the Lasso, are not answered until recently \cite{bellec2018slope,bu2019algorithmic,hu2019asymptotics}. Still, the substantial difficulty imposed by the sorted penalty impedes the general application of SLOPE for two reasons. From the practical point of view, tuning a $\R^\p$ penalty can be extremely costly for large $p$ (e.g. in high dimensional regression or over-parameterized neural networks) and naive methods that work for the Lasso, such as the grid search, renders not pragmatic. From a theoretical perspective, the sorted norm is complicated in that the effect of thresholding of SLOPE is  \textit{non-separable} and \textit{data-dependent}, unlike the Lasso, thus making the analysis much involved. 

In this paper, we further exploit the advantage of the data-depending penalty in SLOPE and investigate, from the estimation error perspective, how to design the SLOPE penalty sequence to achieve better performance. 

We give a computationally efficient framework to design the SLOPE penalty sequence $\blam\in\R^\p$ which corresponds to an estimator $\bethat(\blam)$ that minimizes the estimation error. To be more specific, we derive the gradient of penalty for SLOPE under the Approximate Message Passing (AMP) regime \cite{bayati2011dynamics, bayati2011lasso,donoho2010message,donoho2009message} and propose the \textit{$k$-level} SLOPE for the general data matrcies. In words, $k$-level SLOPE is a sub-class of SLOPE, where the $p$ elements in $\{\lambda_i\}$ have only $k$ unique values. Under this definition, the general SLOPE is $p$-level SLOPE and the Lasso is indeed 1-level SLOPE. Additionally, $k$-level SLOPE is a sub-class of $(k+1)$-level SLOPE, and larger $k$ leads to better performance but requires longer computation time. As a result, by choosing an appropriate $k$, we can establish a trade-off between speed and accuracy. We illustrate in various experiments that such a trade-off is of practical use as even a small $k$ may improve the performance non-trivially.

\subsection{Notations}
We start by introducing the proximal operator of SLOPE,
\begin{align}
  \operatorname{prox}_{J_{\boldsymbol{\theta}}}(\boldsymbol{y}):=\underset{\bm b}{\operatorname{arg min}} \frac{1}{2}\|\boldsymbol{y}-\boldsymbol{b}\|^{2}+J_{\boldsymbol{\theta}}(\boldsymbol{b}),
\end{align}
where $J_{\thet}(\bm b):= \sum_{i=1}^p \theta_i|b|_{(i)}$ and the proximal operator indeed solves \eqref{eq:SLOPE} with an identity data matrix. This operator is the building block that is iteratively applied to derive the SLOPE estimator in the proximal gradient descent (ISTA \cite{daubechies2004iterative}) and in FISTA \cite{beck2009fast}. We note that there is no closed form of $\operatorname{prox}_{J_{\boldsymbol{\theta}}}(\boldsymbol{x})$ but it can be efficiently computed as in \cite[Algorithm 3]{bogdan2015slope}. Next we denote the mean squared error (\MSE) between two vectors in $\mathbb{R}^m$ as $\MSE(\bm u,\bm v):=\|\bm u-\bm v\|^2/m$. Two performance measures that are investigated in this work are the \textit{prediction error} with $m=n$, $\MSE(\bm y,\hat{\bm y})$, and the \textit{estimation error} with $m=p$, $\MSE(\bet,\hat\bet)$.

\section{SLOPE penalty design under AMP regime}

\subsection{Computing the gradients with respect to the penalty}
We introduce a special regime of the AMP for SLOPE \cite{bu2019algorithmic}, within which the SLOPE estimator can be asymptotically exactly characterized. A similar regime is the case when Convex Gaussian Min-max Theorem (CGMT) \cite{celentano2020lasso,thrampoulidis2018precise,thrampoulidis2015regularized,thrampoulidis2014gaussian} applies, which shares similar assumptions as those of AMP. We then derive the gradient of $\MSE(\bet,\bethat)$ with respect to the penalty $\blam$ and optimize our penalty design iteratively. Generally speaking, AMP is a class of gradient-based optimization algorithms that mainly work on independent Gaussian random data matrices, offering both a sequence of estimators that converges to the true minimizer and a distributional characterization of the latter (see \cite[Theorem 3]{bu2019algorithmic} and \cite[Theorem 1]{hu2019asymptotics}). Here we present the five assumptions of the SLOPE AMP \cite{bu2019algorithmic}:
\begin{itemize}
    \item The data matrix $\X$ has independent and identically-distributed (i.i.d.) gaussian entries that have mean 0 and variance $1/n$.
    
    \item The signal $\mathbf{\beta}$ has elements that are i.i.d. and follow $\Pi$, with $\mathbb{E}\left(\Pi^{2} \max \{0, \log \Pi\}\right)<\infty$.
    
    \item The noise $\bm w$ is elementwise i.i.d. and follows $W$, with $\sigma_{w}^{2}:=\mathbb{E}\left(W^{2}\right)<\infty$.
    
    \item The vector $\boldsymbol{\lambda}(p)=\left(\lambda_{1}, \ldots, \lambda_{p}\right)$ is elementwise i.i.d. and follows $\Lambda$, with $\mathbb{E}\left(\Lambda^{2}\right)<\infty$.
    
    \item The ratio $n/p$ reaches a constant $\delta \in (0,\infty)$ in the large system limit, as $n$ and $p \rightarrow \infty$.

\end{itemize} 
Under these assumptions, Theorem 3 in \cite{bu2019algorithmic} provides an asymptotic characterization of $\bethat$, which can be informally interpreted as
\begin{align}
\bethat(\bm\lambda)\approx \operatorname{prox}_{J_{\bfalph\tau}}\left(\boldsymbol{\beta}+\tau\boldsymbol{Z}\right)
\label{eq:AMP result}
\end{align}
in which $(\bfalph(\blam),\tau(\blam))$ are the unique solutions of two key equations, namely the finite-dimension approximation of the \textit{calibration} and the \textit{state evolution} in the AMP (or CGMT) regime (see \cite{bu2019algorithmic,hu2019asymptotics}; note that AMP is an asymptotic theory):
\begin{align}
  \boldsymbol{\lambda}&=\boldsymbol{\alpha} \tau\left(1-\frac{1}{n} \mathbb{E}\left\|\operatorname{prox}_{J_{\bfalph\tau}}\left(\bet+\tau \boldsymbol{Z}\right)\right\|_{0}^{*}\right)
  \label{eq:calibration}
  \\
  \tau^{2}&=\sigma_w^{2}+\frac{1}{\delta p} \mathbb{E}\left\|\operatorname{prox}_{J_{\bfalph\tau}}\left(\boldsymbol{\beta}+\tau\boldsymbol{Z}\right)-\boldsymbol{\beta}\right\|^{2}
\label{eq:state evolution}
\end{align}
Here we assume the noise $\w$ has variance $\sigma_w^2$, $\|\cdot\|_0^*$ is a modified $\ell_0$ norm that counts the unique non-zero absolute values in a vector and $\bm Z\in\R^\p$ is a vector in which each element is i.i.d. standard normal. We denote $\delta:=\lim_p \n/\p$ as the aspect ratio or sampling ratio and $\epsilon:=\lim_p |\{j:\beta_j\neq 0\}|/p$. 

Using \eqref{eq:AMP result}, we observe that to minimize $\MSE(\bet,\bethat)$ is equivalent to finding desirable $(\bfalph,\tau)$. We now introduce some properties that are useful in deriving the desirable $\blam$, which uniquely defines $(\bfalph,\tau)$. By \cite[Proposition 2.3]{bu2019algorithmic}, the calibration \eqref{eq:calibration} describes a bijective, monotone and parallel mapping $\Lambda$\footnote{
For a given $\bfalph$, we can use (\ref{eq:state evolution}) to obtain a unique $\tau(\bfalph)$ and leverage (\ref{eq:calibration}) to obtain a corresponding penalty vector $\blam(\bfalph)$.} between $\bfalph$ and $\blam$ \cite[Proposition 2.3]{bu2019algorithmic}, which allows us to work with $\bfalph$ easily instead of $\blam$. By \cite[Theorem 1]{bu2019algorithmic}, the state evolution \eqref{eq:state evolution} can be solved via a fixed point recursion, which converges to the unique solution $\tau(\bfalph)$ monotonically under any initial condition.

Under AMP region, our strategy is to design $\blam\in\R^\p$ in SLOPE that, by quoting \cite[Corollary 3.2]{bu2019algorithmic}, minimizes:
\begin{align*}
\plim\|\hat\bet-\bet\|^2/\p=\delta(\tau^2-\sigma_w^2)
\end{align*}
where $\plim$ is the probability limit. Hence minimizing $\MSE(\bet,\bethat)$ is in fact equivalent to minimizing $\tau$, which depends on $\bfalph$ and leads to differentiating \eqref{eq:state evolution} against each of $\alpha_i$ for $i\in [\p]$. In what follows, we view the scalar $\tau$ as a function of the penalty $\bfalph$ given the prior. Next, we use the gradient information to descend (with the projection elaborated in Algorithm \ref{alg: projection}) till convergence. Once the minimizer $\bfalph$ is obtained, we leverage the calibration \eqref{eq:calibration} to map to the corresponding $\blam(\bfalph)$. 

In what follows, we shorthand $\prox_{J_{\bm b}}(\bm a)$ by using $\Eta(\bm a;\bm b)$. In particular $\prox_{J_{\bfalph\tau}}(\boldsymbol{\beta}+\tau\boldsymbol{Z})$ is denoted by $\Eta$ and we let $\eta_j$ represent its $j$-th element. We define a set $I_j := \{k:|\eta_k|=|\eta_j|\}$, which will be used in characterization of gradients.  We also define an inverse mapping for ranking of indices: $\sigma: \{1,\dots,p\} \to \{1,\dots,p\}$ such that $\sigma(i) = j$ representing $|\Eta|_{(i)} =  |\eta_j|$. Consider a toy example $\Eta = (-2,-4,3,1)$, then the ranking of magnitudes is $(3,1,2,4)$ whose inverse gives: $\sigma(1) = 2$, $\sigma(2) = 3$, $\sigma(3) = 1$ and $\sigma(4) = 4$. This mapping is useful in assigning the penalties to coefficients in $\bethat$ due to the sorting procedure.

We state the following theorem to give a concrete form of gradients $\partial\tau / \partial \alpha_i$, which is used in the projected gradient descent (PGD) in Algorithm \ref{alg: PGD}.
\begin{theorem}
The gradients satisfy
\begin{align}
\frac{\partial\tau}{\partial\alpha_i} =
\E \frac{1}{|I_{\sigma(i)}| D(\mathbf{\alpha},\tau)}\sum_{j \in  I_{\sigma(i)}}(\eta_j-\beta_j)\sgn(\eta_j)\tau
\end{align}
where $D(\mathbf{\alpha},\tau)$ is a negative constant that is independent of index $i$.
\label{thm:gradient}
\end{theorem}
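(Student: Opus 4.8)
The plan is to treat $\tau$ as a scalar function of $\bfalph$ defined implicitly by the state evolution \eqref{eq:state evolution} and to differentiate that identity in $\alpha_i$. Write $g(\bfalph,\tau):=\frac{1}{\delta\p}\E\|\Eta-\bet\|^2$ with $\Eta=\prox_{J_{\bfalph\tau}}(\bet+\tau\Z)$, so that \eqref{eq:state evolution} reads $\tau^2=\sigma_w^2+g(\bfalph,\tau)$. Since $\tau$ enters $g$ both through the input $\bet+\tau\Z$ and through the penalty $\bfalph\tau$, differentiating totally in $\alpha_i$ and collecting the $\partial\tau/\partial\alpha_i$ terms gives
\begin{align}
\Big(2\tau-\tfrac{\partial g}{\partial\tau}\Big)\frac{\partial\tau}{\partial\alpha_i}=\frac{\partial g}{\partial\alpha_i},
\end{align}
where $\partial g/\partial\alpha_i$ keeps only the \emph{explicit} dependence of the penalty on $\alpha_i$. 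As $\partial g/\partial\tau$ carries no reference to a single index $i$, it is the natural candidate for (a scalar multiple of) the index-free constant $D(\bfalph,\tau)$.

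The core step is to differentiate the sorted-$\ell_1$ prox. On any region where the clustering pattern of $\Eta$ is frozen I would use the pool-adjacent-violators form of $\prox_{J_{\bfalph\tau}}$: with $v:=\bet+\tau\Z$, $\theta:=\bfalph\tau$ and $r:=\sigma^{-1}$ the forward ranking, each cluster $I_j$ carries the common magnitude $|\eta_j|=\frac{1}{|I_j|}\sum_{k\in I_j}(|v_k|-\theta_{r(k)})$. This yields the block Jacobian $\partial\eta_j/\partial v_k=\sgn(\eta_j)\sgn(v_k)/|I_j|$ for $k\in I_j$ (zero otherwise) and $\partial\eta_j/\partial\theta_m=-\sgn(\eta_j)/|I_j|$ precisely when $\sigma(m)\in I_j$. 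Substituting $\mathrm{d}v_k/\mathrm{d}\alpha_i=Z_k\,\partial\tau/\partial\alpha_i$ and $\mathrm{d}\theta_m/\mathrm{d}\alpha_i=\tau\,\mathbf 1\{m=i\}+\alpha_m\,\partial\tau/\partial\alpha_i$ into the chain rule, the explicit term $\tau\,\mathbf 1\{m=i\}$ fires only through $\theta_i$, hence only for the single cluster containing $\sigma(i)$, namely $I_{\sigma(i)}$. Using $\sgn(v_j)=\sgn(\eta_j)$ on nonzero clusters (zero clusters drop out because $\sgn(0)=0$), the explicit part becomes
\begin{align}
\frac{\partial g}{\partial\alpha_i}=-\frac{2\tau}{\delta\p}\,\E\,\frac{1}{|I_{\sigma(i)}|}\sum_{j\in I_{\sigma(i)}}(\eta_j-\beta_j)\sgn(\eta_j).
\end{align}
Dividing by $2\tau-\partial g/\partial\tau$ reproduces the claimed formula with $D(\bfalph,\tau)=-\tfrac{\delta\p}{2}\big(2\tau-\partial g/\partial\tau\big)$, which is manifestly independent of $i$ and may be pulled inside the expectation together with the constant $\tau$.

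It remains to show $D<0$, i.e. $2\tau-\partial g/\partial\tau>0$, for which I would invoke the monotone convergence of the state-evolution recursion $\tau\mapsto\sqrt{\sigma_w^2+g(\bfalph,\tau)}$ to its unique fixed point \cite[Theorem 1]{bu2019algorithmic}: stability of that fixed point forces the map's derivative $\tfrac{1}{2\tau}\partial g/\partial\tau$ below $1$, which is exactly $\partial g/\partial\tau<2\tau$. The \textbf{main obstacle}, however, is the non-smoothness of the sorted-$\ell_1$ prox: the clustering pattern is only locally constant, so the block Jacobian above is valid only away from the pattern boundaries. The real work is thus to verify that these boundaries (ties in $|v|$ and crossings into the zero cluster) form a Lebesgue-null set in $\Z$, hence occur with probability zero under the Gaussian law, and that $\Eta$ is Lipschitz in $(\bfalph,\tau)$ with an integrable envelope; together these legitimize interchanging $\partial/\partial\alpha_i$ with the expectation in \eqref{eq:state evolution}. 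The boundary at which a coordinate enters the support needs the additional check that the one-sided derivatives agree in expectation, which again follows from the null-set argument.
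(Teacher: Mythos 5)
Your derivation of the gradient formula itself is essentially the paper's own proof, reorganized in implicit-function-theorem language. The paper likewise differentiates the finite-size state evolution in $\alpha_i$, applies the chain rule through $a_k=\beta_k+\tau Z_k$ and $b_k=\alpha_k\tau$ using the cluster-block Jacobian of the SLOPE prox (which it cites from \cite{su2016slope,bu2019algorithmic} rather than rederiving from the pool-adjacent-violators representation as you do), notes that the explicit $\alpha_i$-dependence fires only on the cluster $I_{\sigma(i)}$, and then merges all terms containing $\tau'=\partial\tau/\partial\alpha_i$ --- which is exactly your identity $(2\tau-\partial g/\partial\tau)\,\tau'=\partial g/\partial\alpha_i$. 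Your $D(\bfalph,\tau)=-\tfrac{\delta\p}{2}\left(2\tau-\partial g/\partial\tau\right)$ agrees with the paper's explicit expression
\begin{equation}
D(\bfalph,\tau)=-n\tau+\sum_{j=1}^p\E\,\frac{1}{|I_j|}(\eta_j-\beta_j)\sgn(\eta_j)\sum_{k\in I_j}\bigl(\sgn(\eta_k)Z_k-\alpha_{\sigma^{-1}(k)}\bigr),
\end{equation}
since $n=\delta\p$. Up to this point the proposal is correct and follows the same route.

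The genuine gap is your proof that $D<0$. You infer from the monotone convergence of the state-evolution recursion to its unique fixed point that the map $\tau\mapsto\sqrt{\sigma_w^2+g(\bfalph,\tau)}$ has derivative strictly below $1$ there, i.e.\ $\partial g/\partial\tau<2\tau$. That inference is invalid: global monotone convergence only forces the \emph{non-strict} bound $\tfrac{1}{2\tau}\partial g/\partial\tau\le 1$. For instance, $F(x)=x-(x-x^*)^3$ has $F'(x^*)=1$ and yet its iterates converge monotonically to $x^*$ from any nearby initialization. Strictness is precisely what the theorem needs --- if $D=0$ the stated formula divides by zero --- so your argument as written proves only $D\le 0$. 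The paper closes this by a direct estimate rather than a stability heuristic: from the state evolution it takes $\tau^2>\tfrac1n\sum_j\E(\eta_j-\beta_j)^2$, and then, per cluster, Cauchy--Schwarz combined with the fixed-point identity $\sum_{j\in I}(\eta_j-\beta_j)\sgn(\eta_j)=\sum_{k\in I}\bigl(\tau Z_k\sgn(\eta_k)-\alpha_{\sigma^{-1}(k)}\tau\bigr)$ (the same PAV averaging formula you use) yields $\sum_{j\in I}(\eta_j-\beta_j)^2\ge\tfrac{\tau}{|I|}\sum_{j\in I}(\eta_j-\beta_j)\sgn(\eta_j)\sum_{k\in I}\bigl(Z_k\sgn(\eta_k)-\alpha_{\sigma^{-1}(k)}\bigr)$; summing over clusters bounds the random part of $D$ by $\tfrac1\tau\sum_j\E(\eta_j-\beta_j)^2<n\tau$, hence $D<0$. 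To repair your version you would need either this explicit bound or a strict structural property of the state-evolution map (e.g.\ concavity in $\tau^2$, as in the Lasso analysis) extracted from \cite{bu2019algorithmic}; the bare convergence statement is not enough.

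One further remark: the regularity issues you flag as the ``main obstacle'' (ties in $|\bet+\tau\Z|$ forming a null set, dominated convergence to swap $\partial/\partial\alpha_i$ with $\E$) are indeed necessary for full rigor, but the paper does not address them either --- it differentiates under the expectation formally. So this is not a gap relative to the paper's own proof, though in your write-up it remains an unexecuted to-do rather than a completed step.
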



Here the expectation is taken with respect to $\Z$ in $\bm \eta$, which in turn also affects $I_{\sigma(i)}$. The detailed form of $D(\bfalph,\tau)$ in the denominator and the proof of Theorem \ref{thm:gradient} can be found in Appendix \ref{a1}, where we also claim that $D(\mathbf{\alpha},\tau)$ is always negative. In practice, we can either set the step size $s_t$ as constant or simply set $D=-1$ to save computation time. We remark that, using a constant step size $s$ and $-\E\left(\sum_{j \in  I_{\sigma(i)}}(\eta_j-\beta_j)\sgn(\eta_j)\tau\right)/|I_{\sigma(i)}|$ as the gradient is equivalent to using a time-dependent $s_t=s\cdot |D(\bfalph^t,\tau_t)|$ and the actual gradient $\frac{\partial\tau}{\partial\alpha_i}$.

\subsection{Projection onto non-negative decreasing vectors}
We notice that $\blam$ must be non-negative and decreasing, and that $\bfalph$ must be decreasing as it is parallel to $\blam$ by \eqref{eq:calibration}. Hence the vanilla gradient descent is unsuitable for this constrained optimization problem of $\bfalph$. We design a projected gradient descent (PGD) in the following. To do so, we first give Algorithm \ref{alg: projection} to compute the projection and establish the correctness of the algorithm in Theorem \ref{pgd}.

Let $\mathcal{S}$ denote the set of non-negative and decreasing vectors in $\R^\p$ (i.e. $\blam \in\mathcal{S}\Rightarrow\lambda_i \geq \lambda_{i+1}\geq 0, \forall i$). Define the projection on to $\mathcal{S}$ as 
\begin{align}
\Pi_{\mathcal{S}}(\mathbf{\gamma})= \textup{argmin}_{\mathbf{\gamma'} \in \mathcal{S}} \frac{1}{2}\|\mathbf{\gamma} - \mathbf{\gamma'}\|^2_2.
\label{eq:projection}
\end{align}


\begin{algorithm}[H]
    \begin{algorithmic}
    \State\textbf{Input: } Arbitrary sequence $\bm \gamma = (\gamma_1, \dots, \gamma_p)$
	\For{$i=1,\cdots,p$}
    \State $\triangleright$ Identify the shortest sub-sequence $\{\gamma_{j}, \dots, \gamma_i\}$ whose average is smaller than its left neighbor (with $\gamma_0=\infty$):
    $$
    \frac{1}{i-j+1}\sum_{k=j}^i \gamma_k \leq \gamma_{j-1} \ \ 
    $$
    \State $\triangleright$ Assign the average value to such sub-sequence for $(\gamma_{j}, \dots, \gamma_i)$:
    $$
    \gamma_{j}, \dots, \gamma_i \leftarrow \frac{1}{i-j+1}\sum_{k=j}^i \gamma_k
    $$
	\EndFor
	\State\textbf{Output: } $\max\{\mathbf{\gamma},0\}$
	\Comment{Element-wise truncation}\\
	\end{algorithmic}
	\caption{\texttt{ProjectOnS} ($\Pi_\mathcal{S}$)}
	\label{alg: projection}
\end{algorithm}
We show that Algorithm \ref{alg: projection} indeed finds the minimizer of \eqref{eq:projection} and provide the proof in Appendix \ref{a2}.

\begin{theorem} \label{pgd}
Given an arbitrary $\mathbf{\gamma} \in \mathbb{R}^p$ as input, Algorithm \ref{alg: projection} outputs the projection of $\mathbf{\gamma}$ on $\mathcal{S}$, that is, $\Pi_{\mathcal{S}}(\mathbf{\gamma})$.
\end{theorem}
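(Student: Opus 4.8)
The plan is to recognize that $\mathcal{S}=\mathcal{D}\cap\mathbb{R}^p_{\ge0}$, where $\mathcal{D}:=\{\mathbf{\gamma}:\gamma_1\ge\cdots\ge\gamma_p\}$ is the closed convex cone of non-increasing sequences, and to observe that Algorithm~\ref{alg: projection} is the composition of two operations: the pooling loop, which I claim computes the Euclidean projection $\Pi_{\mathcal{D}}$ onto the decreasing cone, followed by the elementwise truncation $\max\{\cdot,0\}$. Accordingly I would split the proof into two parts: (A) the pooling loop returns $\hat{\mathbf{\gamma}}:=\Pi_{\mathcal{D}}(\mathbf{\gamma})$, and (B) $\Pi_{\mathcal{S}}(\mathbf{\gamma})=\max\{\Pi_{\mathcal{D}}(\mathbf{\gamma}),0\}$. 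Combining the two gives that the output $\max\{\hat{\mathbf{\gamma}},0\}$ equals $\Pi_{\mathcal{S}}(\mathbf{\gamma})$, as required.

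For (A), the loop is the Pool Adjacent Violators procedure. I would argue by induction on $i$ that after the $i$-th iteration the prefix is non-increasing and coincides with the projection of $(\gamma_1,\dots,\gamma_i)$ onto the decreasing cone in $\mathbb{R}^i$: locating the shortest suffix $\{\gamma_j,\dots,\gamma_i\}$ with average at most its left neighbor $\gamma_{j-1}$ and replacing it by that average preserves the invariant, because appending a constant block whose value does not exceed $\gamma_{j-1}$ keeps the prefix non-increasing. The standard optimality characterization of antitonic regression then identifies the terminal iterate with $\Pi_{\mathcal{D}}(\mathbf{\gamma})$: it partitions $[p]$ into consecutive blocks on each of which $\hat{\mathbf{\gamma}}$ is constant and equal to the block mean of $\mathbf{\gamma}$, with decreasing block values. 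This part is classical and I would either cite it or verify it directly via the block structure; the facts I will reuse are the polar-cone (Moreau) conditions $\langle\mathbf{\gamma}-\hat{\mathbf{\gamma}},\mathbf{u}\rangle\le0$ for every $\mathbf{u}\in\mathcal{D}$ and $\langle\mathbf{\gamma}-\hat{\mathbf{\gamma}},\hat{\mathbf{\gamma}}\rangle=0$, together with the property that the residual $\mathbf{\gamma}-\hat{\mathbf{\gamma}}$ sums to zero on each block.

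For (B), set $\mathbf{\gamma}^\star:=\max\{\hat{\mathbf{\gamma}},0\}$; since $\hat{\mathbf{\gamma}}$ is non-increasing, so is $\mathbf{\gamma}^\star$, and its last entry is non-negative, hence $\mathbf{\gamma}^\star\in\mathcal{S}$. It then suffices to verify the variational inequality $\langle\mathbf{\gamma}-\mathbf{\gamma}^\star,\mathbf{v}-\mathbf{\gamma}^\star\rangle\le0$ for all $\mathbf{v}\in\mathcal{S}$. Let $t$ be the threshold index with $\hat\gamma_i\ge0$ for $i\le t$ and $\hat\gamma_i<0$ for $i>t$; because $\hat{\mathbf{\gamma}}$ is constant on blocks, the tail $\{i>t\}$ is a union of whole blocks, each with negative mean. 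Writing $\mathbf{\gamma}-\mathbf{\gamma}^\star=(\mathbf{\gamma}-\hat{\mathbf{\gamma}})+(\hat{\mathbf{\gamma}}-\mathbf{\gamma}^\star)$ and expanding, the plan is to bound two contributions: the $(\mathbf{\gamma}-\hat{\mathbf{\gamma}})$ term is controlled by the polar-cone inequality applied to $\mathbf{v}\in\mathcal{D}$ together with a cross term $\langle\mathbf{\gamma}-\hat{\mathbf{\gamma}},\hat{\mathbf{\gamma}}-\mathbf{\gamma}^\star\rangle$ that vanishes because the block-wise residual sum is zero, while the $(\hat{\mathbf{\gamma}}-\mathbf{\gamma}^\star)$ term reduces to $\sum_{i>t}\hat\gamma_i v_i\le0$ since $\hat\gamma_i<0$ and $v_i\ge0$ on the tail. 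Both pieces being non-positive establishes the inequality.

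The main obstacle is (B): the interaction between the monotone projection and the truncation does not follow from a generic "projection onto an intersection equals composition of projections'' statement, which is false in general. The argument genuinely uses the block-constant structure produced by (A) — specifically that the negative part of $\hat{\mathbf{\gamma}}$ is a union of complete blocks and that residuals cancel block by block — so I expect the bookkeeping of which blocks lie in the tail, and the verification that the cross term vanishes, to be the delicate step. Handling the boundary case where a block mean equals exactly zero, so that $t$ is well defined and the tail blocks are strictly negative, is the other point I would check.
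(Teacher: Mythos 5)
Your proposal is correct, and it reaches the theorem by a genuinely different route than the paper. The paper never factors the constraint set: it defines \emph{segmentation blocks} of $\gamma$ directly, proves by perturbation-and-contradiction arguments (an induction inside each block, driven by an elementary quadratic-comparison fact) that $\Pi_{\mathcal{S}}(\gamma)$ must be constant on each such block and equal to the truncated block mean, and then, in a second contradiction argument, shows that the pooling loop reproduces exactly these blocks. You instead write $\mathcal{S}=\mathcal{D}\cap\mathbb{R}^p_{\geq 0}$ with $\mathcal{D}$ the cone of non-increasing vectors, identify the loop as Pool Adjacent Violators computing $\hat{\gamma}=\Pi_{\mathcal{D}}(\gamma)$ (classical antitonic-regression optimality), and reduce the theorem to the lemma $\Pi_{\mathcal{S}}(\gamma)=\max\{\hat{\gamma},0\}$, verified through the variational inequality; your computation there is sound: $\langle \gamma-\hat{\gamma},\, v-\hat{\gamma}\rangle\leq 0$ since $v\in\mathcal{S}\subset\mathcal{D}$, the cross term $\langle \gamma-\hat{\gamma},\,\hat{\gamma}-\gamma^\star\rangle$ vanishes because $\hat{\gamma}-\gamma^\star=\min\{\hat{\gamma},0\}$ is constant on each PAV block while the residual $\gamma-\hat{\gamma}$ sums to zero on each block, and the tail term reduces to $\sum_{i>t}\hat{\gamma}_i v_i\leq 0$. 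What your route buys is modularity and brevity: part (A) can be cited from the isotonic-regression literature, and part (B) is a short convex-analysis argument that isolates the one genuinely new fact; you are also right that it does not follow from a generic ``projection onto an intersection equals composed projections'' principle, and right that the block structure is exactly what rescues it here. What the paper's route buys is self-containedness: it needs neither PAV optimality nor Moreau-type conditions, only elementary comparisons, at the cost of a longer and more ad hoc argument; its Step 2 (choosing the optimal per-block constant and truncating at zero, then checking feasibility across blocks) is in effect an unstated version of your part (B). The boundary case you flag (a block mean exactly zero) is harmless in your scheme, since then $\hat{\gamma}=\gamma^\star$ on that block and it contributes to neither the cross term nor the tail term.
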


On high level, the proof consists of two parts. In the first part we provide a detailed characterization of $\Pi_\mathcal{S}(\gamma)$ by partitioning the index sequence $\{1,\dots, p\}$ into a number of carefully selected sub-sequences. We prove that within each sub-sequence, $\Pi_{\mathcal{S}}(\mathbf{\gamma})$ takes the same value at each index, and such value is exactly the average of the sub-sequence $\bm \gamma$'s values at these indices. In the second part, we prove that Algorithm 1 indeed finds such sub-sequences and thus operates in a way that matches the goal of the projection $\Pi_\mathcal{S}(\bm\gamma)$. The final truncation of the averaged sequence at 0 is a trivial method to guarantee the non-negativity.

\begin{figure}[!htp]
	\centering
	\includegraphics[width=6cm,height=4cm]{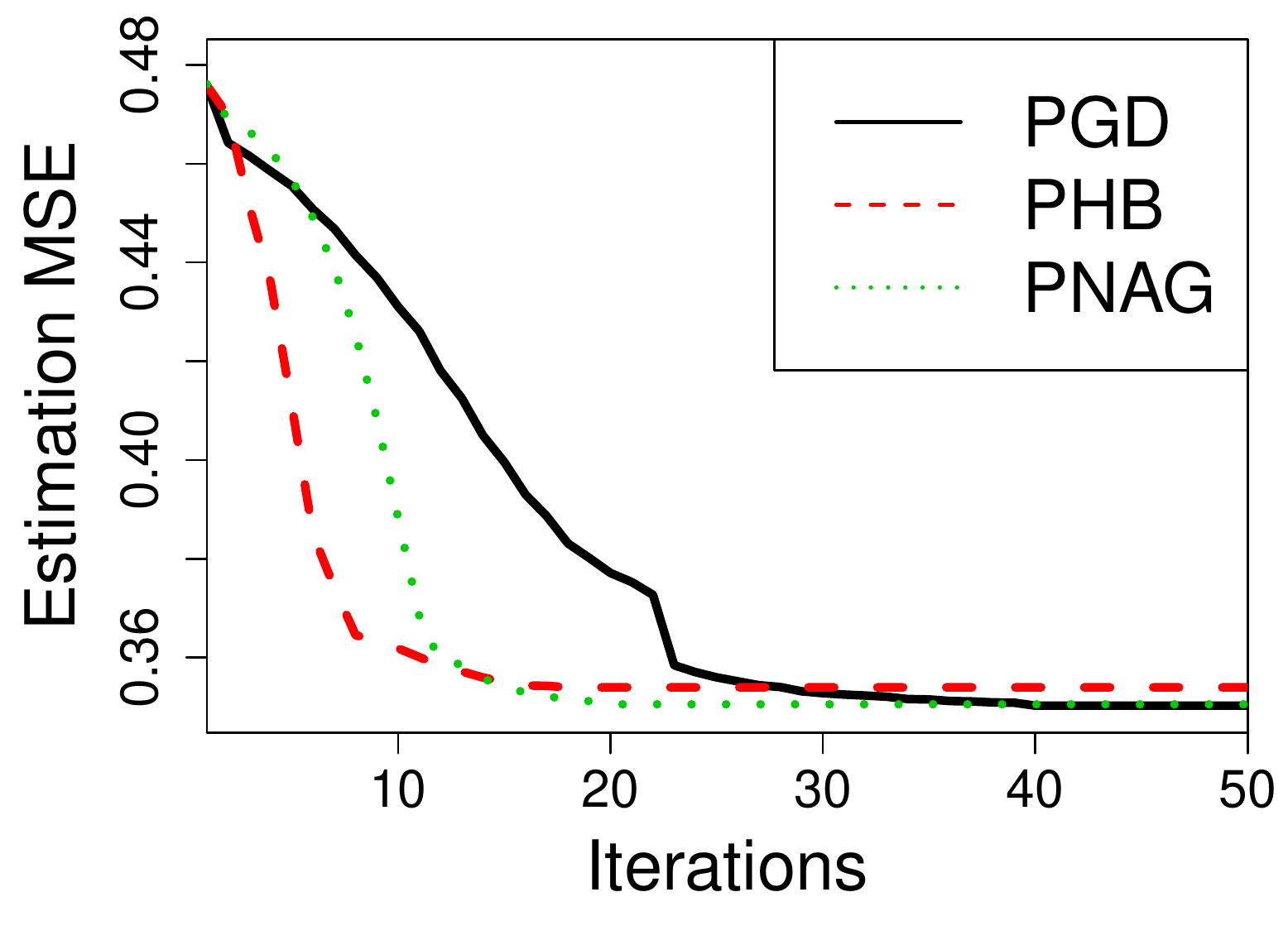}
	\includegraphics[width=5.8cm,height=4cm]{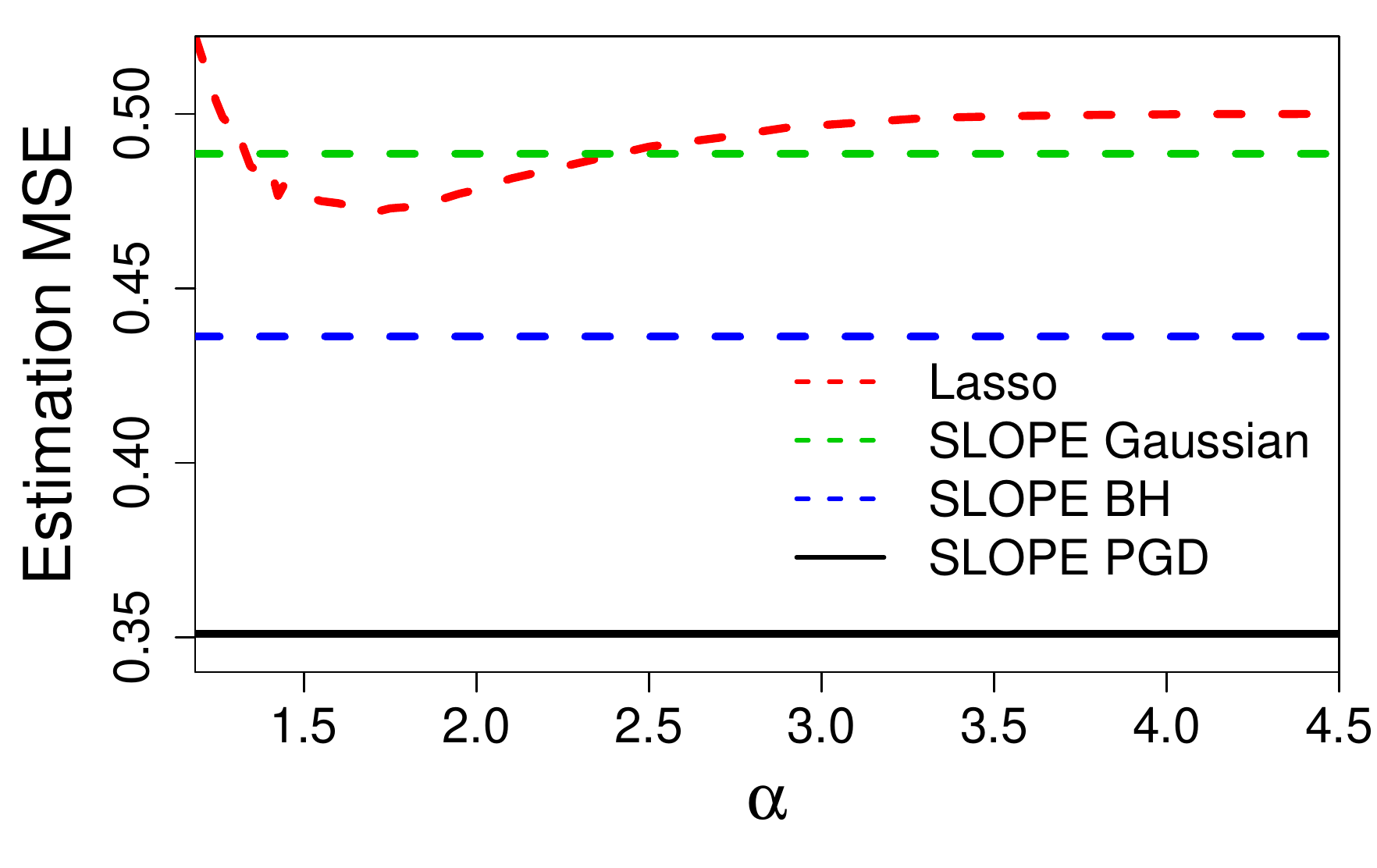}
	\includegraphics[width=6.5cm]{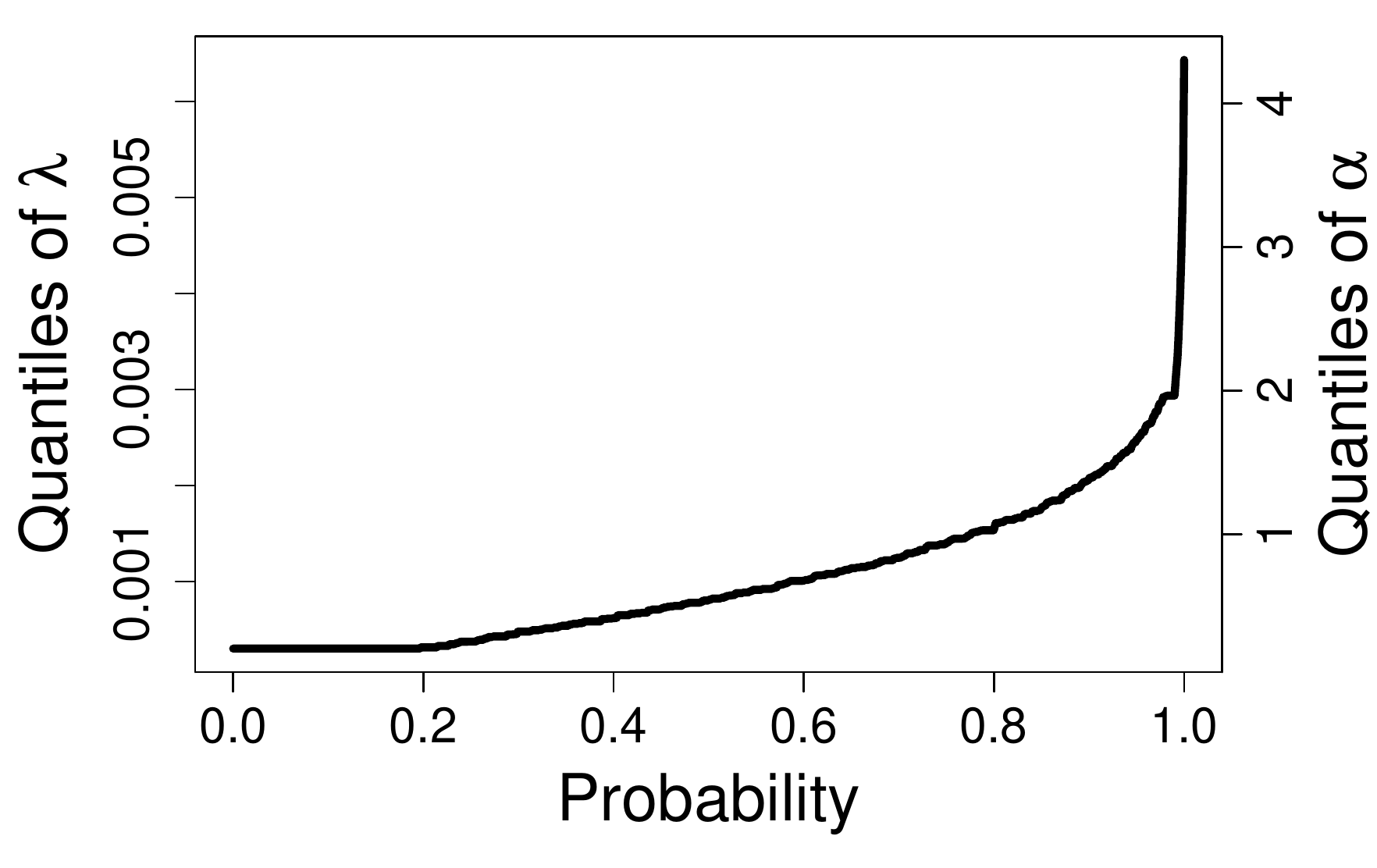}
	\vspace{0.1cm}
	\includegraphics[width=6.5cm]{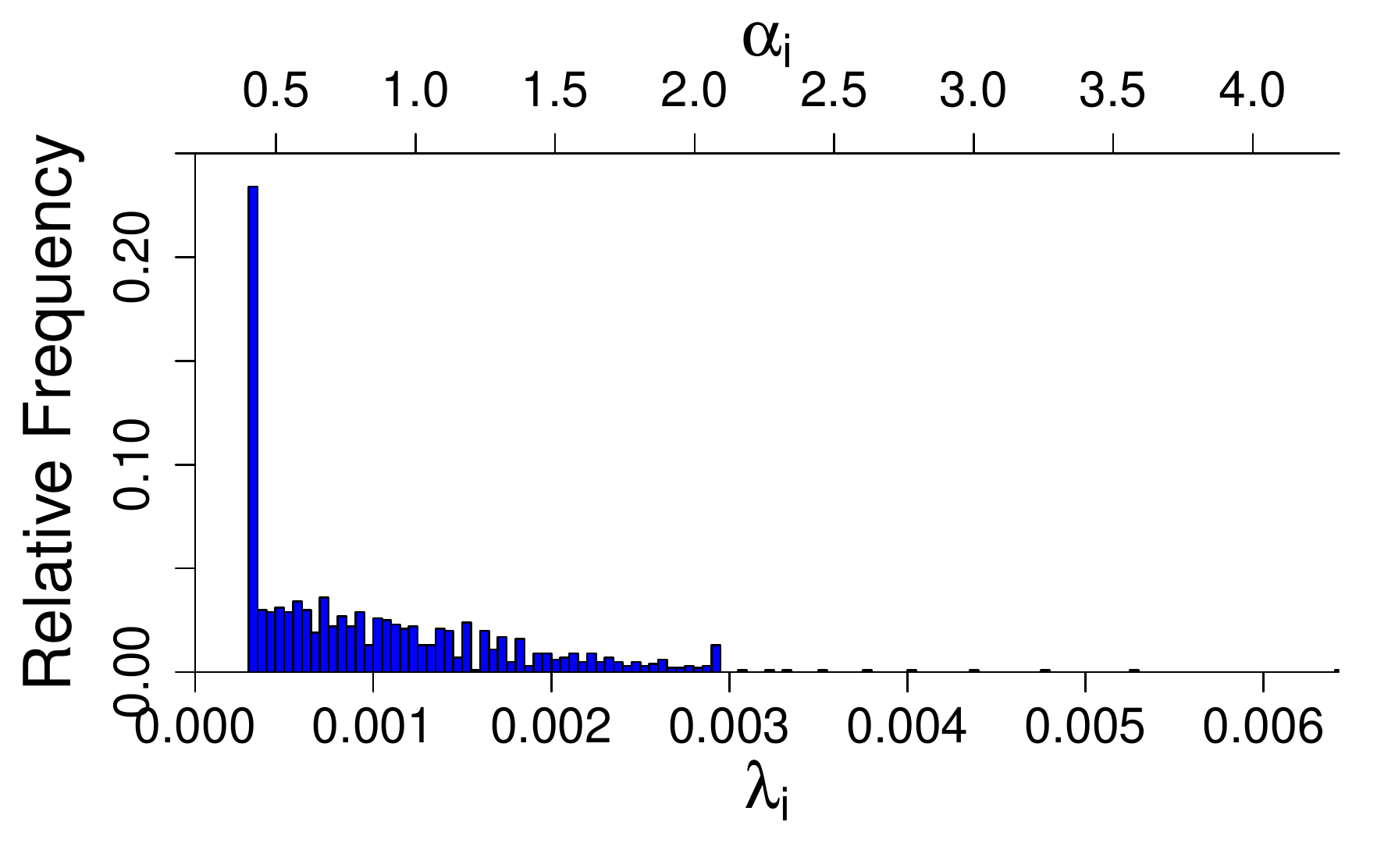}	
	\vspace{-0.5cm}
    \caption{$\X$ i.i.d. $\mathcal{N}(0,1/n), n=300, p=1000, \Pi$ is Bernoulli($\epsilon$), $\delta=n/p=0.3, \epsilon=0.5, \sigma_w=0$. 
		Top: A sample run of PGD that finds the minimizing $\bfalph$, \textit{not} the minimizing $\hat\beta$ for SLOPE in \cite{bu2019algorithmic}. Additionally, we plot two gradient descent methods with 0.9 momentum: the projected heavy ball (PHB) and the projected Nesterov accelerated gradient. All methods use the fine-tuned Lasso penalty as their starting points. 
		Top-middle: Red dashed line is Lasso MSE path (each point corresponds to one Lasso penalty, as $\lambda$ varies from 0 to large values); other lines are different SLOPE MSE for a single SLOPE penalty. BH here stands for ``Benjamini and Hochberg''.
		Bottom-middle: Best $\bfalph$ (right y-axis) and best $\blam$ (left y-axis) sequences found by PGD.
		Bottom-right: Histogram of best $\blam$ (bottom) and $\bfalph$ (upper) sequences found by PGD.}
\label{fig:optimal SLOPE}
\end{figure}

\subsection{Projected Gradient Descents}


Now that we have the gradients in Theorem \ref{thm:gradient} and the projection in Algorithm \ref{alg: projection}, the projected gradient descent is straight-forward. In each iteration we first conduct gradient descent using Theorem \ref{thm:gradient} and transform the sequence from $\bfalph$ regime to $\blam$ regime. The transformation $\Lambda$ is defined in Section 2.1 using the calibration and the state evolution in AMP. Then we project the gradient onto the constrained space $\mathcal{S}$ and transform it back to $\bfalph$ regime. The procedure is summarized in Algorithm \ref{alg: PGD}.
	
	

\begin{algorithm}[H]
\begin{algorithmic}
	\State\textbf{Input: } initial $\mathbf{\alpha}^{0}$, step size $\{s_t\}$
	\For{$t=1,\cdots,T$}
	\State $\triangleright$ Gradient descent on $\mathbf{\alpha}$ and transform to $\blam$ regime
	\State $\mathbf{\gamma}^{t+1} = \Lambda(\mathbf{\alpha}^{t} -s_t \nabla_{\mathbf{\alpha}}(\tau(\mathbf{\alpha}^t)))$
	\State $\triangleright$ Project onto $\mathcal{S}$
	\State $\blam^{t+1}=\texttt{ProjectOnS}(\bm\gamma^{t+1})$
	
	\State $\triangleright$ Transform back to $\bfalph$ regime
	
	\State $\mathbf{\alpha}^{t+1} = \Lambda^{-1}(\blam^{t+1})$

	\EndFor
	\State\textbf{Output: } $\mathbf{\alpha}^{T+1}$
\end{algorithmic}
	\caption{\texttt{Projected Gradient Descent} (PGD)}
	\label{alg: PGD}
\end{algorithm}
We highlight that Algorithm \ref{alg: PGD} is only one form of PGD. In fact, with a concrete form of the gradients, we can use any off-the-shelf first-order optimizer to find $\bfalph$ iteratively. Some examples include projected versions of stochastic gradient descent, Heavy Ball method \cite{polyak1964some}, Nesterov accelerated gradient descent \cite{nesterov1983method}, Adagrad \cite{duchi2011adaptive}, AdaDelta \cite{zeiler2012adadelta} and Adam \cite{kingma2014adam}. We include some of these optimizers in Figure \ref{fig:optimal SLOPE}.

To understand the convergence behavior of PGD, we need to study the convexity of the domain $\mathcal{S}$ and the objective function $\tau$. Clearly $\mathcal{S}$ is convex by simply applying the definition. Unfortunately, $\tau(\bfalph)$ for SLOPE is in general non-convex: even in the Lasso AMP regime where $\alpha\in\R$, it is shown that $\tau(\alpha)$ is only a quasi-convex function of $\alpha$ \cite[Theorem 3.3]{mousavi2018consistent}. As for SLOPE, the analysis on quasi-convexity of $\tau(\bfalph)$ has not been established. Yet, we do not observe any local minimum in practice. This is possibly the case since some non-convex problems may still enjoy desirable properties such as having unique global minimum or not having local minima. 

Remarkably, the gradient information that we use distinguishes our work from \cite{hu2019asymptotics}. We pause a bit and compare our approach with theirs, as they work under very similar assumptions as our AMP regime (in fact, both AMP and CGMT regimes agree asymptotically). Instead of optimizing directly on $\tau$, they propose to optimize the proximal operator $\eta$ in the functional space \cite[Proposition 3]{hu2019asymptotics}: for a fixed candidate $\tau$, they use the finite approximation with 2048 grids to solve a functional optimization, whose minimum is $\mathcal{L}(\tau)$. Next, they check the feasibility of the candidate $\tau$ by whether $\mathcal{L}(\tau) \leq \delta(\tau^2-\sigma_w^2)$. Lastly, a binary search is conducted to find the optimal $\tau$ (smallest feasible $\tau$) and the optimal design can be derived from the corresponding $\eta$. In summary, this approach took a detour by using a zeroth-order optimization algorithm, as the authors did not search over $\blam$ (or $\bfalph$) directly. Our first-order algorithm overcomes the seemingly unwieldy computation burden, especially in the high dimension when $p$ is very large.

\subsection{Transforming from $\alpha$ to $\lambda$}
Once we find the desirable $\bfalph$ with Algorithm \ref{alg: PGD}, the calibration \eqref{eq:calibration} allows us to convert $\bfalph$ to $\blam$ in the original SLOPE problem. We demonstrate in Figure \ref{fig:optimal SLOPE} and Figure \ref{fig:SLOPE vs MMSE} that SLOPE can outperform the best-tuned Lasso significantly. In Figure 1, SLOPE reduces $\MSE(\bet,\bethat)$ from 0.473 by Lasso to 0.350 by SLOPE, a 26\% improvement in the estimation error. In fact, we observe from Figure \ref{fig:SLOPE vs MMSE} that SLOPE is even comparable to Minimum Mean Squared Error (MMSE; proposed by \cite{bayati2011dynamics}) estimator, which produces the lowest MSE possible. We emphasize that our result does not contradict \cite{wang2019does} which states that, under some conditions, the Lasso is the optimal SLOPE. We note that the condition in \cite[Theorem 2]{wang2019does} does not hold for large systems: the premise of Lasso being optimal is that the Lasso achieves exact recovery, which requires $n\sim p\log p$ (see \cite{wainwright2009sharp}). Therefore, in our setting where $n/p\to\delta$, the Lasso is incapable of achieving the exact recovery nor outperforming general SLOPE.

\section{$k$-level SLOPE}
In this section we propose a method, described in Algorithm \ref{alg: PCD}, that works on the general linear model. I.e. our method works on arbitrary data $\X,\y,\bm\beta,\bm w$ and does not require $n/p\to\delta$ when $n,p\to\infty$. 

In contrast to the AMP regime, we directly search on $\blam$ without implicitly using $\bfalph$ and we do not try to use the gradient information. To avoid searching in the high dimension of $\blam$ space, we propose to restrict that the penalty $\blam$ only contains $k$ different non-negative values, which is denoted by $(\lambda_1,\cdots,\lambda_k;S_1,\cdots,S_{k-1})$. Here $\lambda_i$ denotes the penalty magnitude and $S_i$ represents the splitting index in $[p]$, where the penalty magnitudes change, i.e, $S_{i}-S_{i-1}$ entries in $\blam$ take the value $\lambda_i$. We note that $\lambda_i$ is decreasing in $i$ while $S_i$ is increasing, guaranteeing that $\blam$ satisfies the assumption of SLOPE. As an example in $\R^5$, $\blam=(7,5,1;2,3)=(7,7,5,1,1)$. We name this restricted SLOPE problem as the \textit{$k$-level SLOPE} and design the $(2k-1)$ degree of freedom penalty $\blam$, so as to only search in the reduced dimension $k \ll p$.

Notice that the original SLOPE is the $p$-level SLOPE and the Lasso is the 1-level SLOPE. We note that $k$-level SLOPE is always a sub-case of $(k+1)$-level SLOPE. Therefore intuitively, by allowing $k$ to take values other than 1 and $p$, we can trade off the difficulty of designing the penalty and the accuracy gain by employing more penalty levels. We demonstrate that empirically, the trade-off is surprisingly encouraging: even 2 or 3 levels of penalty is sufficient to exploit the benefit of SLOPE.



\subsection{Practical penalty design for $k$-level SLOPE}
We emphasize that in the general regime beyond AMP and CGMT, we cannot access the gradient information nor the functional optimization in \cite{hu2019asymptotics} for two reasons: the true $\bm\beta$ distribution is not known in real data and the data matrix $\X$ is general (not i.i.d. with a specific variance). To design the $k$-level SLOPE penalty in the real-world datasets, we propose the Coordinate Descent (CD, Algorithm \ref{alg: PCD})\footnote{We slightly abuse the notation of \MSE{ } to mean either the estimation error (only available in synthetic data) or the prediction error.} and compare to the PGD in Algorithm \ref{alg: PGD} under the AMP and CGMT regimes in Figure \ref{fig:SLOPE vs MMSE}.

\begin{algorithm}[!htb]
\begin{algorithmic}
\State\textbf{Input:} initial $\blam, \MSE_{old}=\infty$, level $k$
\While{$\MSE<\MSE_{old}$}
\State Set $\MSE_{old}=\MSE$
\For{$i \in \{1,\dots,k\}$} 
\State $\triangleright$ Search on magnitudes $\lambda_i$
\State$\lambda_i = \underset{\lambda_i\in(\lambda_{i+1},\lambda_{i-1})}{\textup{argmin}} \MSE(\lambda_1,\cdots;S_1,\cdots)$
\EndFor
\For{$i \in \{1,\dots,k-1\}$} 
\State $\triangleright$ Search on splits $S_i$
\State$S_i = \underset{S_i\in(S_{i-1},S_{i+1})}{\textup{argmin}} \MSE(\lambda_1,\cdots;S_1,\cdots)$
\EndFor
\EndWhile
\State\textbf{Output: $\blam=(\lambda_1,\cdots,\lambda_k;S_1,\cdots,S_{k-1})$}
\end{algorithmic}
\caption{\texttt{Coordinate Descent} (CD)}
\label{alg: PCD}
\end{algorithm}

We highlight some details of Algorithm \ref{alg: PCD} that make it efficient and practical. First of all, Algorithm \ref{alg: PCD} directly works on $\blam$ instead of $\bfalph$ (the calibration is generally unavailable). Second, the projection is not needed as in Algorithm \ref{alg: PGD} since $\blam$ is decreasing and non-negative by our definition of the search domain. Third, Algorithm \ref{alg: PCD} is flexible in the following sense: (1) we can choose any order of coordinates to successively minimize the error, e.g. by $\lambda_1,S_1,\lambda_2,S_2,\cdots$; (2) we can use any zeroth-order search method such as the grid search or the binary search for the magnitudes and splits.

\section{Experiments}

In this section, we justify the effectiveness of $k$-level SLOPE on various synthetic and real datasets, on linear and logistic regression tasks. For the sake of implementation consistency, we adopt R package \texttt{SLOPE} to run both Lasso and SLOPE in experiments. Empirically, we remark that PGD and $k$-level CD are both significantly fast in all experimental settings, taking only a few minutes to converge even for $p=1000$.







\subsection{Synthetic datasets}

\subsubsection*{Independent case}
In this experiment we investigate the performance of $k$-level SLOPE in the AMP regime: data matrix $\X$ is i.i.d. $\mathcal{N}(0,1/n), n=300$. The signal distribution $\Pi$ is Gaussian-Bernoulli with probability $0.5$ being standard normal and 0 otherwise. We work on a high-dimensional setting where $\delta=n/p=0.3$ and hence $\X\in\R^{300\times 1000}$. The noise $\sigma_w$ is 0. We observe from Figure \ref{fig:SLOPE vs MMSE} that employing more levels of penalty is beneficial and fast, suggesting that even a small $k$ may be sufficient to reduce the errors significantly.

\begin{figure}[!htp]
	\centering
	\includegraphics[width=7cm]{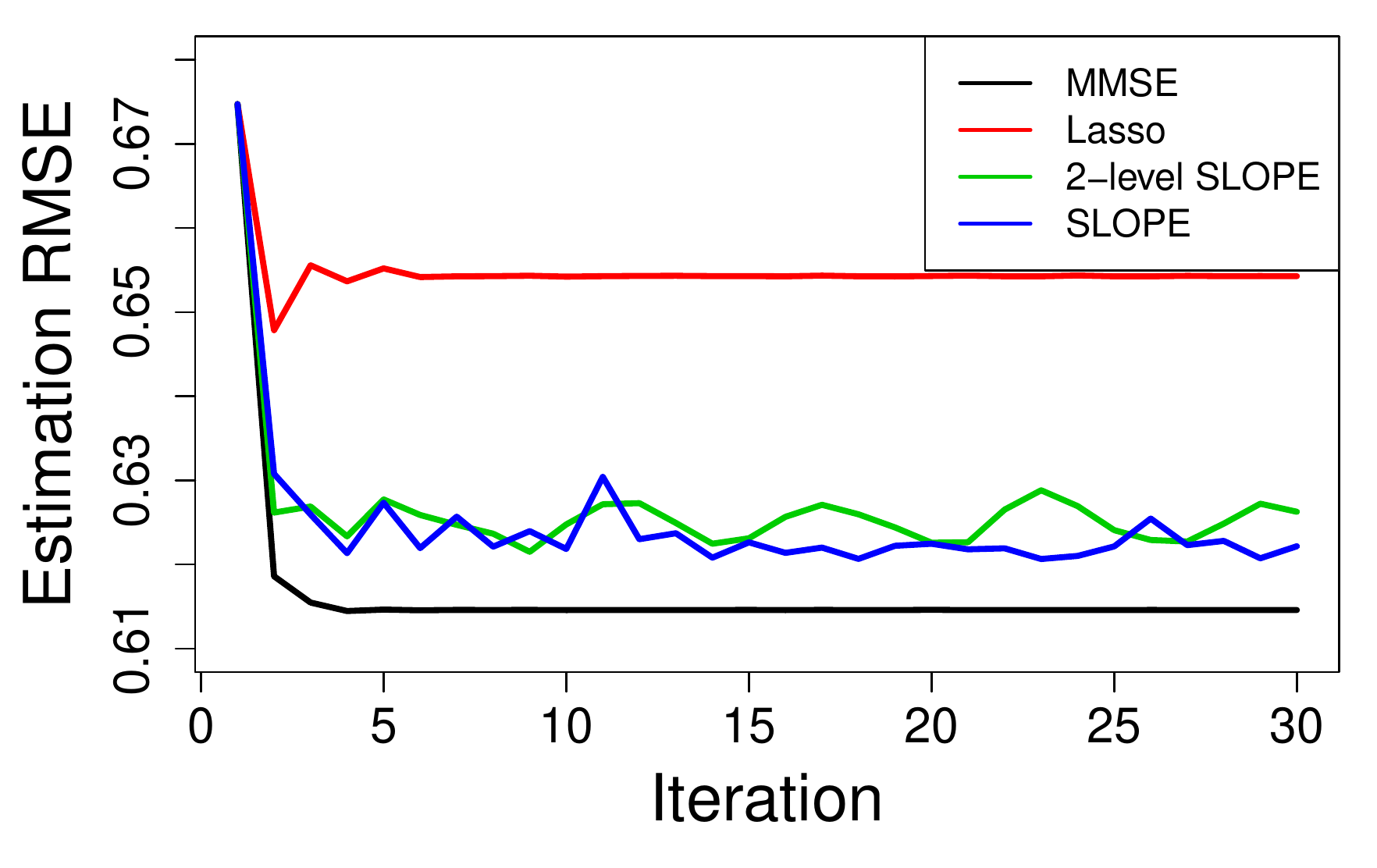}
	\includegraphics[width=7cm]{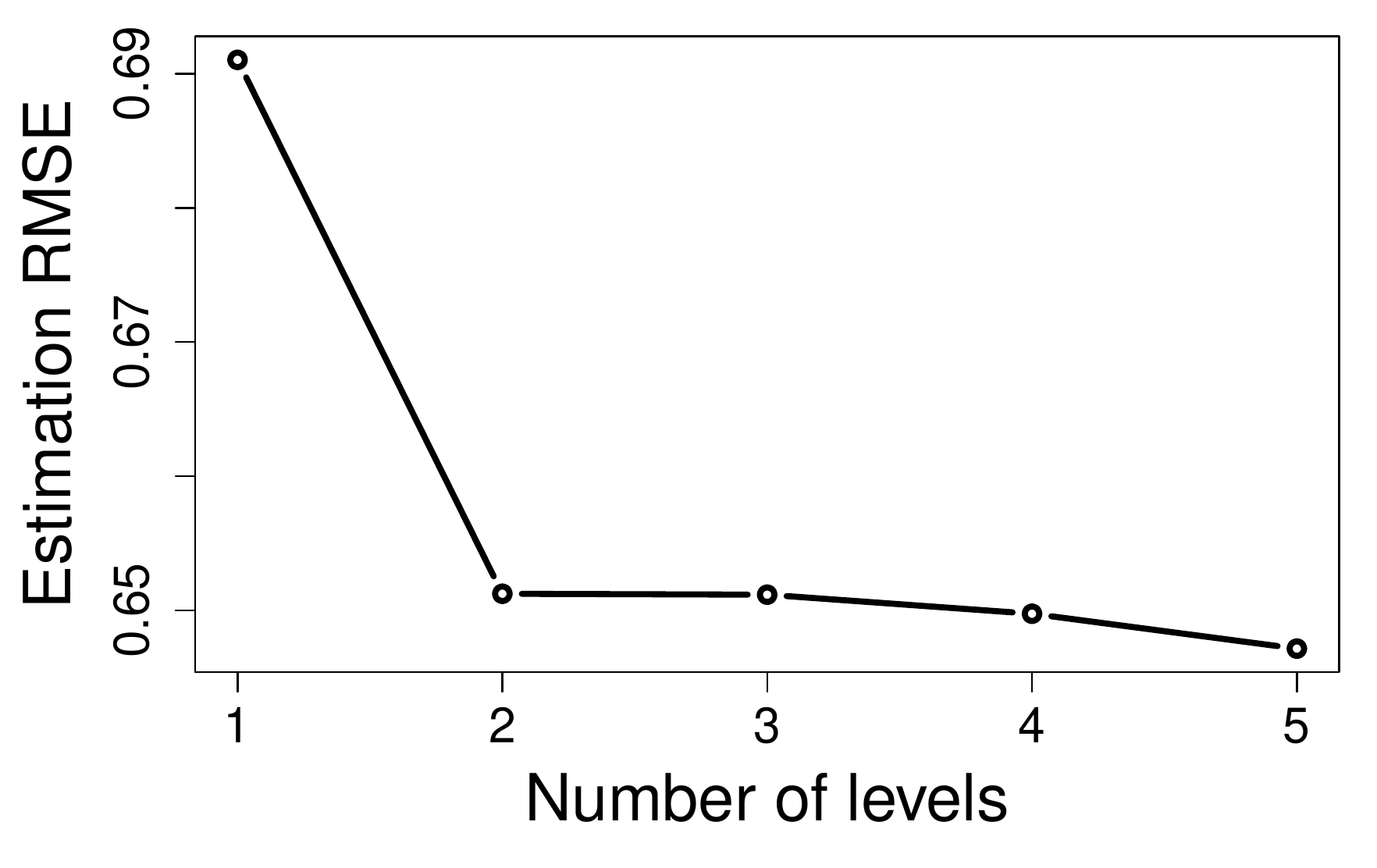}
	\caption{Top: the result of a single run for MMSE AMP, Lasso AMP, 2-level SLOPE AMP (by CD) and the $p$-level SLOPE AMP (by PGD). Bottom: averaged result over 10 independent runs of different $k$-level SLOPE by CD.}
\label{fig:SLOPE vs MMSE}
\end{figure}

\subsubsection*{Dependent case}
Different from the AMP regime in which each entry in the design matrix is i.i.d. gaussian, we study the performance of 2-level SLOPE in a synthetic dataset with features strongly correlated with each other. We include three other methods: Lasso and SLOPE with two other designs: Benjamini Hochberg design and the MR design proposed in \cite{bellec2018slope}. The data $\X$ is generated from an ARMA(1,1) model:
\begin{equation}\label{arma}
    X_{t}=\varepsilon_{t}+ 0.8 X_{t-1}+ 0.8 \varepsilon_{t-1}
\end{equation}
where $X_t$ denote the $t$-th feature and $\varepsilon_{t}$ follows i.i.d. $ \mathcal{N}(0,1)$. We set $\Pi$, the asymptotic distribution of $\bet$, to be i.i.d. Gaussian-Binomial: $\bet_i \sim \B\Z$ with $\B \sim B(5,0.3)$ and $\Z$ being standard normal. In terms of the dimension, we study two cases (1) $n = 20$, $p = 50$; (2) $n = 200$, $p = 500$. 10-fold cross-validation $\MSE(\y,\hat\y)$ are calculated for both the Lasso and SLOPE. We highlight that, different than the previous section, we investigate the prediction error instead of the estimation error here. Curves for $\MSE(\y,\hat\y)$ with different iterations in both cases are shown in Figure \ref{fig: synthetic}. In the first case, using grid search, the optimal prediction $\MSE(\y,\hat\y)$ given by Lasso is 0.128 while the optimal prediction $\MSE(\y,\hat\y)$ given by 2-level SLOPE (using Algorithm \ref{alg: PCD}) is 0.083. Prediction errors of SLOPE with other two penalty sequences are also under 0.1, but worse than that of 2-level SLOPE. We observe a $35\%$ improvement on prediction error when using 2-level SLOPE for this case of small sample size and dimension, compared with Lasso. In the second case, the optimal prediction $\MSE(\y,\hat\y)$ given by Lasso and other two SLOPEs are no smaller than 0.2, while that of $2$-level SLOPE is 0.186, giving a 7.5\% reduction in the prediction error.

\subsection{Real datasets for linear and logistic regression}
To further demonstrate the utility of $k$-level SLOPE in practice, we apply the model to real datasets, where $\MSE(\bm\beta,\hat{\bm\beta})$ is intractable, and focus on the prediction $\MSE(\y,\hat\y)$. In this experiment, again we compare the performance of 2-level SLOPE in a linear regression setting with three other methods we studied in Section 4.1. The dataset we adopt is atherosclerosis cardiovascular disease (ASCVD), which records medical information of 236 patients and their corresponding ASCVD risk score (outcome variable). We select 1000 features out of 4216 features, which have the largest correlation with the outcome variable. We conduct 20-fold cross-validation and calculate the cross-validation prediction $\MSE(\y,\hat\y)$. Using grid search, the optimal prediction $\MSE(\y,\hat\y)$ given by Lasso is 0.528. Interestingly, prediction $\MSE(\y,\hat\y)$ given by other two SLOPEs are worse than that of Lasso while that given by 2-level SLOPE (using Algorithm \ref{alg: PCD}) is 0.489. This result clearly demonstrates the outperformance of $k$-level SLOPE compared to Lasso and SLOPE using other penalty sequences. A curve for $\MSE(\y,\hat\y)$ with different iterations is shown in Figure \ref{fig: synthetic}.

We further extend the idea of $k$-level SLOPE in logistic regression and justify the results on Alzheimer's Disease Neuroimaging Initiative (ADNI) gene dataset. The dataset contains over 19000 genomic features of 649 patients, along with a binary disease status (normal or ill). We select the first 300 patients in the original dataset and 500 features out of the total features, which has the largest correlation with the outcome variable. We conduct 10-fold cross-validation and calculate the cross-validation prediction accuracy. Using grid search, the optimal prediction accuracy given by Lasso is 0.62. The optimal prediction accuracy given by 2-level SLOPE (using Algorithm \ref{alg: PCD}) is 0.66.

\begin{figure}[!htp]
	\centering
	\includegraphics[width=7cm]{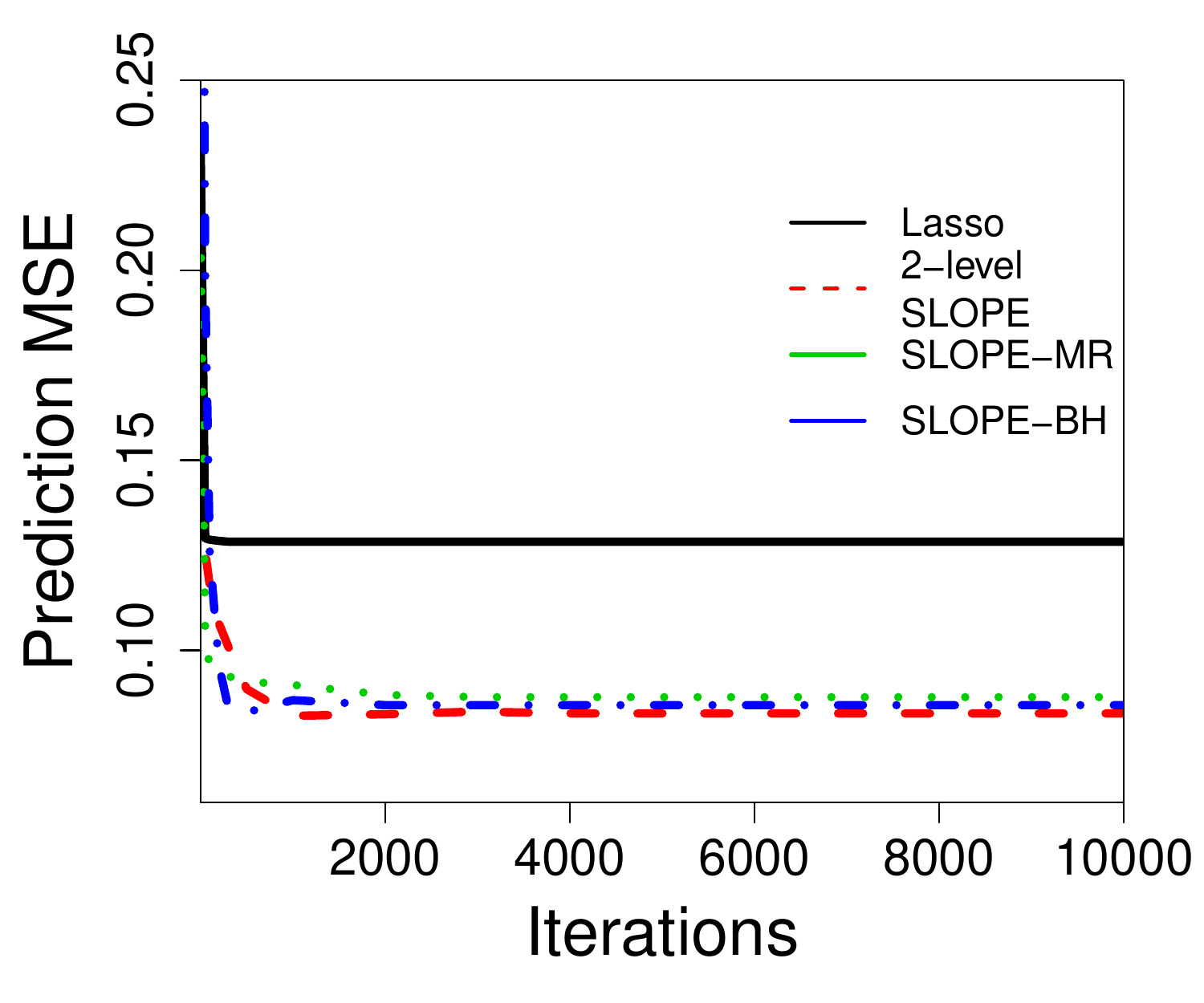}
	\includegraphics[width=7cm]{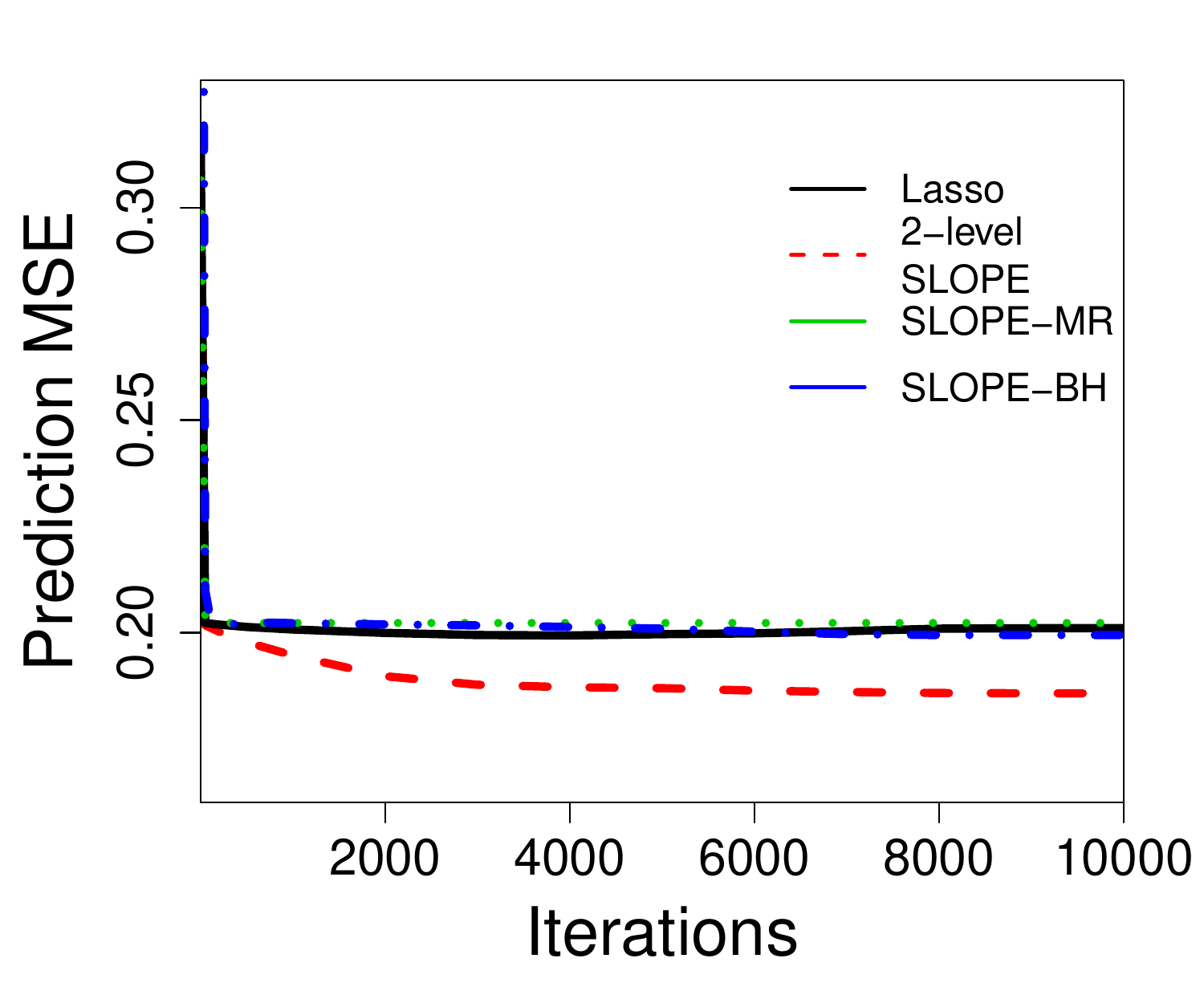}
	\includegraphics[width=7cm]{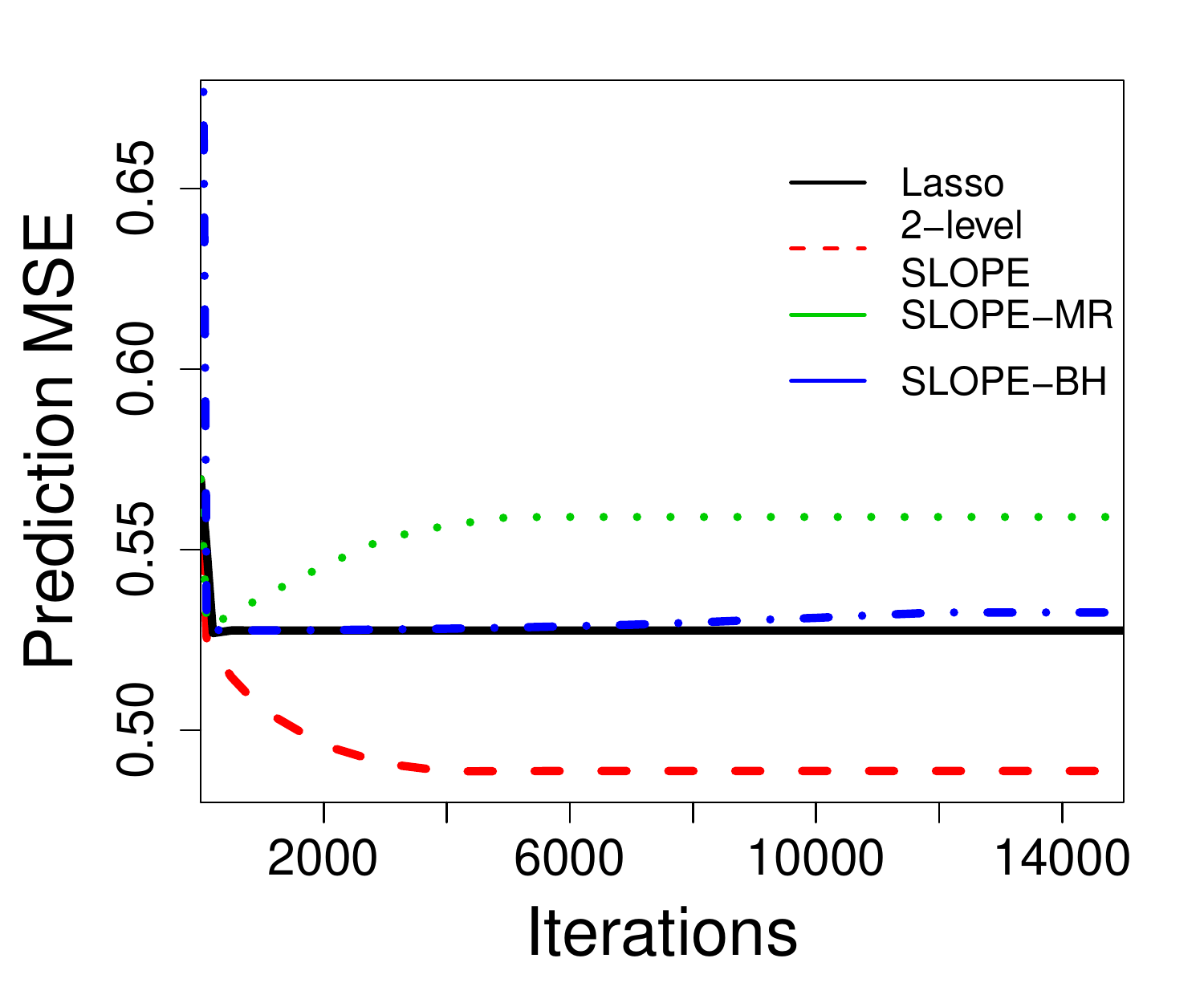}
	\caption{$\MSE(\y,\hat\y)$ in linear regression cases. SLOPE-MR: SLOPE using penalty sequence suggested in \cite{bellec2018slope}; SLOPE-BH: SLOPE using Benjamini–Hochberg penalty sequence in R function `SLOPE'. Top: Synthetic data with $\X$ i.i.d. drawn from ARMA(1,1) model (\ref{arma}), $n=20$, $p=50$. Middle: $\X$ i.i.d. drawn from (\ref{arma}) with $n=200$, $p=500$. Bottom: the results of ASCVD dataset.}
\label{fig: synthetic}
\end{figure}



\section{Discussion}
In this work, we propose a framework to flexibly and efficiently design the SLOPE penalty sequence. Under the AMP setting, our first-order PGD approach is capable of finding the effective penalty sequence with reasonable computation budget. The key is to use the gradient with respect to the penalty instead of using zeroth-order search as previous works have proposed. In the practical world beyond the AMP setting, via various experiments, we illustrate that the proposed $k$-level SLOPE with penalty sequence determined by Algorithm \ref{alg: PCD} can provide decent results. Although Algorithm \ref{alg: PCD} loses the access to the first-order information when compared to Algorithm \ref{alg: PGD}, the universal ability to search good penalty is desirable for practical use, as we can view the algorithm as a dimension-reduction trick. In many cases even 2-level SLOPE, the simplest $k$-level SLOPE (other than Lasso), can outperform the Lasso in accuracy as well as the ($p$-level) SLOPE in computation speed. Additionally, our framework indeed generalizes to other high-dimensional penalty designs. Some direct extensions include group SLOPE and weighted Lasso.

Much room is left for future study. From a theoretical perspective, the quasi-convexity of $\tau(\alpha)$ in AMP setting is still not well studied. The asymptotic $\MSE{(\bet,\hat\bet)}$ (i.e. Equation (7) in \cite{mousavi2018consistent}) is shown to be quasi-convex in Lasso case. However, no such theoretical property has been shown for SLOPE. If the quasi-convexity indeed holds true for SLOPE AMP, then we can guarantee that the minimizing $\blam$ by PGD is indeed the global minimizer and thus claim our design is optimal. 

It would also be interesting to develop PGD (based on AMP regime) for $k$-level SLOPE, i.e. using gradient descent to find the optimal magnitudes and splits. One could then derive a theoretical trade-off curve between the minimum $\tau$ and each $k$, similarly to Figure 2 bottom subplot. This would suggest a proper choice of $k$ for our $k$-level SLOPE.

From a practical perspective, we anticipate that $k$-level SLOPE can also be explored in various applications that already employ the Lasso, such as the matrix completion, the compressed sensing and the neural network regularization.





\bibliographystyle{plain} 
\bibliography{ref.bib}

\clearpage

\onecolumn

\appendix
\aistatstitle{Supplement Material for `Efficient Designs of SLOPE Penalty Sequences in Finite Dimension'}
\section{Introduction to MMSE AMP}

We firstly introduce the procedure for general AMP procedure. 
\begin{equation}
\begin{aligned}
& s^{(t+1)} = X^{\top}\Z^{(t)} + \bet^{(t)} \\
& \bet^{(t+1)} = \eta^{(t+1)}(s^{(t+1)})\\
& \Z^{(t+1)} = y - X\bet^{(t+1)} + \frac{1}{n}\Z^{(t)}[\nabla \eta^{(t)}(s^{(t)})]
\end{aligned}
\end{equation}
Different $\eta$ functions give different AMP, e.g. the soft-thresholding $\eta$ gives the Lasso AMP; the SLOPE proximal operator $\eta$ gives the SLOPE AMP.

The MMSE AMP adopts the following denoiser $\eta^{(t)}$ \cite{bayati2011dynamics}
$$
\eta_i^{(t)}(s) = \mathbb{E}[\bet|\bet + \tau_t\z = s_i] \text{ \ \ \ $i$ = 1, \dots, $p$}
$$
with $\z \sim \mathcal{N}(0,1)$. In above, using the state evolution \cite{bu2019algorithmic}, $\tau_t^2$ can be calculated iteratively as:
$$
\tau_t^2 = \sigma_{\omega}^2 + \frac{1}{\delta}\mathbb{E}[(\eta^{(t-1)}(\bet + \tau_{t-1}\z) - \bet)^2]
$$

Assume that the measurement matrix $X$ has i.i.d. $\mathcal{N}(0,1/n)$ entries. In many scenarios, the denoiser $\eta^{(t)}$ might be hard to calculate. Here we provide a derivation about calculating $\eta^{(t)}$ in the Bernoulli-Gaussian case: we assume that true signal $\bet \overset{i.i.d.}{\sim} \B$ where $\B$ is a Bernoulli-Gaussian distribution, i.e. $\bet_i = 0$ with probability $e \in [0,1]$, otherwise $\bet_i \sim \mathcal{N}(0, \sigma^2_{\B})$. 
\begin{equation}\label{eq: denoiser}
\mathbb{E}[\bet|\bet + \tau_t\z = s_i] = \mathbb{E}[\bet|\bet \neq 0, \bet + \tau_t\z = s_i]\mathbb{P}(\bet \neq 0| \bet + \tau_t\z = s_i)
\end{equation}
It's straightforward to see that, with $f$ denoting the corresponding probability density function,
\begin{equation}\label{eq:A}
\mathbb{P}(\bet \neq 0| \bet + \tau_t\z = s_i) = \frac{f(\bet + \tau_t\z = s_i|\bet \sim \mathcal{N}(0,\sigma^2_{B}))(1-e)}{f(\bet + \tau_t\z = s_i|\bet \sim \mathcal{N}(0,\sigma^2_{B}))(1-e) + f(\tau_t\z = s_i)e}
\end{equation}
Meanwhile. we have
$$
\mathbb{E}[\bet|\bet \neq 0, \bet + \tau_t\z = s_i] = \mathbb{E}[\bet|\bet \sim \mathcal{N}(0,\sigma^2_{B}), \bet + \tau_t\z = s_i]
$$
since $\bet + \tau_t\z \sim \mathcal{N}(0,\sigma^2_{B}+\tau_t^2)$, conditional expectation on joint normal distribution yields
\begin{equation}\label{eq:B}
\mathbb{E}[\bet|\bet \sim \mathcal{N}(0,\sigma^2_{B}), \bet + \tau_t\z = s_i] = \frac{\sigma^2_{B}}{\sigma^2_{B} + \tau^2_t}s_i
\end{equation}
(\ref{eq:A}) and (\ref{eq:B}) give a simple way to calculate the denoiser using (\ref{eq: denoiser}).

\newpage

\section{Analysis of Gradient in PGD for $\alpha$} \label{a1}

\begin{proof}[Proof of Theorem \ref{thm:gradient}]

Minimizing the estimation error is equivalent to minimizing $\tau$. Since the AMP algorithms are working on the finite dimension, we analyze the finite-size approximation of the state evolution \cite[Equation (2.5)]{bu2019algorithmic}:
$$\tau^2=\sigma_w^{2}+\frac{1}{\delta p} \mathbb{E}\left\|\operatorname{prox}_{J_{\bfalph\tau}}\left(\bet+\tau \Z\right)-\bet\right\|^{2}$$
In finite dimensions, the expectation is taken with respect to $\Z$. Differentiating both sides of the state evolution with respect to $\alpha_i$ and denoting $\tau'=\frac{\partial \tau}{\partial\alpha_i}$ gives:
\begin{align}
2\tau\tau'&=\frac{\partial}{\partial\alpha_i}\left(\sigma_w^2+\frac{1}{\delta p}\E\|\prox_{J_{\bm\alpha\tau}}(\bet+\tau \Z)-\bet\|^2\right) \nonumber\\
&=\frac{1}{\n}\frac{\partial}{\partial\alpha_i}\sum_{j=1}^{p}\E\left([\prox_{J_{\bm\alpha\tau}}(\bet+\tau \Z)]_j-\bet_j\right)^2
\end{align}
Recall $\eta_j$ represents the $j$-th element of $\Eta := \prox_{J_{\bfalph\tau}}(\boldsymbol{\beta}+\tau\boldsymbol{Z})$. By chain rule
\begin{align}
2\tau\tau'&=\frac{2}{\n}\sum_{j=1}^{p}\E(\eta_j-\beta_j)\frac{\partial\eta_j}{\partial \alpha_i} \nonumber =\frac{2}{\n}\sum_{j=1}^{p}\E(\eta_j-\beta_j) \left[\sum_{k=1}^{p}\frac{d \eta_j}{d a_k}\frac{\partial a_k}{\partial \alpha_i}+\frac{d \eta_j}{d b_k}\frac{\partial b_k}{\partial \alpha_i}\right]
\label{eq:tau'}
\end{align}
where we define $a_k:=\beta_k+\tau Z_k, b_k:=\alpha_k\tau$. To calculate the derivatives, we pause to discuss forms of general derivatives of $ \Eta(\mathbf{a}, \mathbf{b})$. Define
\begin{align}
\partial_1\Eta(\mathbf{a}, \mathbf{b}) := \text{diag}\Big[\frac{\partial}{\partial a_1}, \frac{\partial}{\partial a_2}, \ldots, \frac{\partial}{\partial a_{\p}}\Big]\Eta(\mathbf{a}, \mathbf{b})\\
\partial_2\Eta(\mathbf{a}, \mathbf{b}) := \text{diag}\Big[\frac{\partial}{\partial b_1}, \frac{\partial}{\partial b_2}, \ldots, \frac{\partial}{\partial b_{\p}}\Big]\Eta(\mathbf{a}, \mathbf{b}).
\end{align}
According to \cite[Proof of Fact 3.4]{su2016slope} and \cite[Proof of Theorem 1]{bu2019algorithmic}, we have 
$$
[\partial_1 \Eta(\mathbf{a}, \mathbf{b})]_j=\frac{1}{\text{\#\{$1\leq k\leq \p: | [ \Eta(\mathbf{a}, \mathbf{b})]_k  |=| [ \Eta(\mathbf{a}, \mathbf{b})]_j  |$\}}}
$$  
and that 
\begin{align}
 \frac{d}{da_k}  [\Eta(\mathbf{a}, \mathbf{b})]_j 
 =& \mathbb{I} \{ |\Eta(\mathbf{a}, \mathbf{b})|_j = | \Eta(\mathbf{a}, \mathbf{b})|_k\} \nonumber \sgn(\Eta_j\Eta_k)  [\partial_1 \Eta(\mathbf{a}, \mathbf{b})]_j
\end{align}
for the derivative regardng the first variable. Recall that the permutation $\sigma: \{1,\dots,p\} \to \{1,\dots,p\}$ is the inverse mapping for ranking of indices such that $|\Eta|_{(i)} =  |[ \Eta]_{\sigma(i)}|$. Similarly, according to \cite[Proof of Theorem 1]{bu2019algorithmic}:
\begin{align}
&\frac{d}{d b_k}  [\eta(\mathbf{a}, \mathbf{b})]_j  = - \sgn ([\eta(\mathbf{a}, \mathbf{b})]_{\sigma(k)}) \frac{d}{da_{\sigma(k)}}  [\Eta(\mathbf{a}, \mathbf{b})]_j \nonumber\\
&= \mathbb{I}\big\{ |\eta(\mathbf{a}, \mathbf{b})|_j =|\eta(\mathbf{a}, \mathbf{b})|_{\sigma(k)}\big\}\sgn \big(\eta_j \big)    \big[\partial_1 \eta(\mathbf{a}, \mathbf{b})\big]_j.
\end{align}

In addition to $I_j$ defined in Section 2, we let  $K_j:=\{k:|\eta_{\sigma(k)}|=|\eta_j|\}$, which is the set of ranking indices whose corresponding entries share the same absolute value with $\eta_j$. This notion will be used to replace the indicator term $\mathbb{I}\big\{ |\eta(\mathbf{a}, \mathbf{b})|_j =|\eta(\mathbf{a}, \mathbf{b})|_{\sigma(k)}\big\}$ above. We can rewrite \eqref{eq:tau'} as
\begin{align}
2\tau\tau'&=\frac{2}{\n}\sum_{j=1}^p\E(\eta_j-\beta_j) \left[\sum_{k\in I_j}\frac{d \eta_j}{d a_k}\frac{\partial a_k}{\partial \alpha_i}+\sum_{k\in K_j}\frac{d \eta_j}{d b_k}\frac{\partial b_k}{\partial \alpha_i}\right]
\nonumber\\
& =\frac{2}{\n}\sum_{j=1}^p\E(\eta_j-\beta_j)\sgn(\eta_j) \nonumber \Bigg[\frac{1}{|I_j|}\sum_{k\in I_j}\sgn(\eta_k)\frac{\partial a_k}{\partial \alpha_i} - \frac{1}{|K_j|}\sum_{k\in K_j}\frac{\partial b_k}{\partial \alpha_i}\Bigg] \nonumber\\
& =\frac{2}{\n}\sum_{j=1}^p\E(\eta_j-\beta_j)\sgn(\eta_j)\Bigg[\frac{1}{|I_j|}\sum_{k\in I_j}\sgn(\eta_k)Z_k \tau' \nonumber - \frac{1}{|K_j|}\sum_{k\in K_j}(\alpha_k \tau' + \mathbb{I}\{k=i\} \tau)\Bigg]
\end{align}
Merging the terms containing the derivative $\tau'$ on one side gives
\begin{align}
&\frac{1}{n}\sum_{j \in I_{\sigma(i)}}\E(\eta_j-\beta_j)\sgn(\eta_j)\tau\cdot\frac{1}{|K_j|} \nonumber\\
& = \frac{1}{n}\sum_{j=1}^p\E(\eta_j-\beta_j)\sgn(\eta_j) \nonumber \Bigg[\frac{1}{|I_j|}\sum_{k\in I_j}\sgn(\eta_k)Z_k \tau' - \frac{1}{|K_j|}\sum_{k\in K_j}\alpha_k \tau'\Bigg] - \tau \tau'
\end{align}
Notice that $|I_j| = |K_j|$ due to $\sigma$ being a permutation, we can simplify above as
\begin{align}
\frac{\partial\tau}{\partial\alpha_i}=\E\frac{1}{|I_{\sigma(i)}| D(\mathbf{\alpha},\tau)}\sum_{j \in I_{\sigma(i)}}(\eta_j-\beta_j)\sgn(\eta_j)\tau
\end{align}
where $D(\mathbf{\alpha},\tau)$ in the denominator is
\begin{align}
D(\mathbf{\alpha},\tau) &= -n\tau +  \sum_{j=1}^p \E\frac{1}{|I_j|}(\eta_j-\beta_j)\sgn(\eta_j) \nonumber \sum_{k \in I_j}(\sgn(\eta_k)Z_k - \alpha_{\sigma^{-1}(k)})
\end{align}
We next show that $D(\mathbf{\alpha},\tau)$ is always negative. Firstly observe from (\ref{eq:state evolution}) that
\begin{align}\label{ap:tau}
\tau^2 > \frac{1}{n}\sum_{j=1}^p\mathbb{E}(\eta_j - \beta_j)^2
\end{align}
Now for the set $I_i$ with a fixed index $i$,
\begin{align}
& \ \ \ \sum_{j \in I_i}(\eta_j - \beta_j)^2 \geq \frac{1}{|I_i|} (\sum_{j \in I_i} |\eta_j - \beta_j|)^2\\
& \geq \frac{1}{|I_i|} (\sum_{j \in I_i} (\eta_j - \beta_j)\sgn(\eta_j))^2\\
& = \frac{1}{|I_i|} \sum_{j \in I_i} (\eta_j - \beta_j)\sgn(\eta_j) \sum_{k \in I_i}\tau Z_k\sgn(\eta_k) - \alpha_{\sigma^{-1}(k)}\tau \\
& \geq \frac{\tau}{|I_i|} \sum_{j \in I_i} (\eta_j - \beta_j)\sgn(\eta_j) \sum_{k \in I_j} Z_k\sgn(\eta_k) - \alpha_{\sigma^{-1}(k)}
\end{align}
This in turn implies that
\begin{align}
& \ \ \ \sum_{j=1}^p (\eta_j - \beta_j)^2 = \sum_{j=1}^p \frac{1}{|I_j|}\sum_{k \in I_j}(\eta_k - \beta_k)^2 \geq \sum_{j=1}^p\frac{\tau}{|I_j|} (\eta_j - \beta_j)\sgn(\eta_j) \sum_{k \in I_j} Z_k\sgn(\eta_k) - \alpha_{\sigma^{-1}(k)}
\end{align}
Combining with (\ref{ap:tau}) yields $D < 0$.

\end{proof}

\section{Analysis of Projection in PGD for $\alpha$}
\subsection{Characterization of projection on $\mathcal{S}$} \label{a2}

We firstly prove that Algorithm \ref{alg: projection} indeed finds the projection. To do so we firstly provide a detailed characterization of the projection, then prove that the output of Algorithm \ref{alg: projection} matches the form of projection. We start by defining \textit{blocks} and \textit{segmentation blocks}, upon which our proof highly relies. Suppose $\mathbf{\gamma} = \{\gamma_1, \dots, \gamma_p\}$, \textit{blocks} are subsequences defined as $B(\mathbf{\gamma},u):= \{\gamma_u, \dots, \gamma_{u+L(\mathbf{\gamma},u)-1}\}$ where length $L(\mathbf{\gamma},u)$ is defined as
\begin{equation} \label{L}
\begin{aligned}
L(\mathbf{\gamma},u) = 
\begin{cases}
L^* & \text{if $L^* \neq \emptyset$}\\
p & \text{otherwise}
\end{cases}
\end{aligned}
\end{equation}
where
$$
\begin{aligned}
L^* \overset{\Delta}{=} & \min \Big\{ 1 \leq L \leq p-u \Big| \forall 0 \leq k \leq p-u-L, \frac{1}{k+1}\sum_{i=0}^{k} \gamma_{u+L+i} < \frac{1}{L}\sum_{i=0}^{L-1} \gamma_{u+i} \Big\}
\end{aligned}
$$
Roughly speaking, $L(\mathbf{\gamma},u)$ is the minimum value of a finite set (truncated at $p$ when the set is empty). For each element $L$ in this set, the average value in sequence $\{\gamma_u,\dots, \gamma_{u+L-1}\}$ is always larger than that of arbitrary sequence $\{\gamma_{u+L},\dots, \gamma_{u+L+k}\}$ whose left start is $\gamma_{u+L}$. With such definition of blocks, we can now segment $\mathbf{\gamma}$ into $q \leq p$ blocks:
$$
\begin{aligned}
\mathbf{\gamma} &= \{B(\mathbf{\gamma},1),B(\mathbf{\gamma},L(\mathbf{\gamma},1)+1), B(\mathbf{\gamma}, L(\mathbf{\gamma},L(\mathbf{\gamma},1)+1)+L(\mathbf{\gamma},1)+1),\dots \} \overset{\Delta}{=} \{B_1,\dots, B_q\}
\end{aligned}
$$
We call $B_1, \dots, B_q$ \textit{segmentation blocks} for vector $\mathbf{\gamma}$. It's straightforward to see that $B_k = B(\mathbf{\gamma},L_k)$ where $L_k$ satisfies $L_1 = L(\mathbf{\gamma},1)$ and 
$$
L_k = L(\mathbf{\gamma}, \sum_{i=1}^{k-1} L_{i} + 1)
$$
Our result shows that for input vector $\gamma$, its projection vector $\Pi_{\mathcal{S}}(\mathbf{\gamma})$ takes identical values inside each of the segmentation blocks. Before formally stating the theorem, We first highlight the following fact that will be frequently used in the proof of the theorem.
\begin{fact}\label{fct:1}
For two sequences of length $p$: $\{a_i\}$ and $\{b_i\}$, if $\sum a_i = \sum b_i$, then function $g(C):= \sum(b_i - a_i + C)^2$ is monotonically increasing with respect to $|C|$.
\end{fact}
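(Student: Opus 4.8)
The plan is to reduce everything to a single-variable quadratic in $C$ by absorbing the two sequences into their pointwise difference. Concretely, I would first set $d_i := b_i - a_i$ for each $i$, so that the hypothesis $\sum_i a_i = \sum_i b_i$ translates into the single clean statement $\sum_{i=1}^p d_i = 0$. The whole point of the equal-sum assumption is exactly this: it makes the aggregate of the differences vanish, which is what will eliminate the term that is linear in $C$.

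Next I would expand the square directly. Writing
\begin{align}
g(C) = \sum_{i=1}^p (d_i + C)^2 = \sum_{i=1}^p d_i^2 + 2C\sum_{i=1}^p d_i + p\,C^2,
\end{align}
and substituting $\sum_i d_i = 0$ collapses the middle term, leaving
\begin{align}
g(C) = \sum_{i=1}^p d_i^2 + p\,C^2.
\end{align}
Here the first summand is a nonnegative constant that does not depend on $C$, and $p\,C^2$ is manifestly a strictly increasing function of $|C|$ (since $p \geq 1$). Hence $g$ is monotonically increasing in $|C|$, as claimed. One could equivalently verify $\tfrac{d}{d|C|} g = 2p\,|C| \geq 0$, but the explicit closed form already makes the monotonicity self-evident and avoids any case split on the sign of $C$.

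I do not anticipate a genuine obstacle here, since the statement is an elementary algebraic identity; the only point worth flagging is conceptual rather than technical, namely recognizing that the equal-sum hypothesis is used \emph{solely} to annihilate the cross term $2C\sum_i d_i$, and that without it $g$ would acquire a linear-in-$C$ shift and its vertex would move away from $C=0$, breaking the ``increasing in $|C|$'' conclusion. I would therefore make sure the write-up states the vanishing of $\sum_i d_i$ explicitly at the moment it is invoked, so that the role of the hypothesis is transparent when this Fact is later applied inside the projection argument on $\mathcal{S}$.
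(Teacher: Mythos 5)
Your proof is correct and is essentially identical to the paper's: both expand the square, use $\sum_i (b_i - a_i) = 0$ to eliminate the cross term, and conclude from $g(C) = \sum_i (b_i - a_i)^2 + pC^2$ that $g$ increases in $|C|$.
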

\begin{proof}
Notice that 
$$
\begin{aligned}
\sum(b_i - a_i + C)^2 & = \sum(b_i - a_i)^2 + \sum 2C(b_i - a_i) + pC^2 = pC^2 + \sum(b_i - a_i)^2
\end{aligned}
$$
Hence $g(C)$ is is monotonically increasing with respect to $|C|$.
\end{proof}
\begin{theorem}
Let $B$ denote the segmentation block that contains $\gamma_i$, then
$$
(\Pi_{\mathcal{S}}(\mathbf{\gamma}))_i = \max \left\{ \frac{1}{|B|}\sum_{\gamma_j \in B} \gamma_j, 0 \right\}
$$
\end{theorem}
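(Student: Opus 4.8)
The plan is to exhibit the candidate vector $\hat\gamma$ that equals $\max\{\bar B,0\}$ on each segmentation block $B$ (writing $\bar B:=\frac{1}{|B|}\sum_{\gamma_j\in B}\gamma_j$) and to show it coincides with $\Pi_{\mathcal{S}}(\gamma)$. Since the objective in \eqref{eq:projection} is strictly convex and $\mathcal{S}$ is closed and convex, the projection is unique and is characterized by the obtuse-angle (variational) inequality: $\hat\gamma=\Pi_{\mathcal{S}}(\gamma)$ iff $\hat\gamma\in\mathcal{S}$ and $\langle\gamma-\hat\gamma,\ \gamma'-\hat\gamma\rangle\le 0$ for every $\gamma'\in\mathcal{S}$. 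I would prove this in two stages: first the block-average vector $\gamma^*$ (sign unrestricted) is the projection onto the cone $\mathcal{D}$ of decreasing vectors, and then $\hat\gamma=\max\{\gamma^*,0\}$ recovers the non-negative case.

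The whole argument rests on two structural properties of the segmentation blocks, both consequences of the minimality built into the definition of $L(\gamma,u)$: \textbf{(a)} the block averages are strictly decreasing from one block to the next, which gives feasibility $\hat\gamma\in\mathcal{S}$; and \textbf{(b)} within each block every prefix average is at most the block average, equivalently the centered partial sums $S_j:=\sum_{i=a}^{j}(\gamma_i-\bar B)$ are non-positive and return to $0$ at the block's right end. Property (a) is immediate, because the next block is itself a prefix of the elements following the current one and hence has average strictly below $\bar B$ by the defining inequality of $L(\gamma,u)$.

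Granting (a) and (b), optimality follows via the decomposition $\gamma-\hat\gamma=(\gamma-\gamma^*)+(\gamma^*-\hat\gamma)$. For the first piece, $\gamma-\gamma^*$ has mean zero on every block while $\hat\gamma$ is block-constant, so $\langle\gamma-\gamma^*,\gamma'-\hat\gamma\rangle=\sum_{\text{blocks}}\sum_{i\in B}(\gamma_i-\bar B)\gamma'_i$; summation by parts turns each block term into $\sum_j S_j(\gamma'_j-\gamma'_{j+1})$, which is $\le 0$ since $S_j\le 0$ by (b) and $\gamma'$ is decreasing. For the second piece, $\gamma^*-\hat\gamma$ is entrywise $\le 0$ (it vanishes on positive blocks and equals $\bar B\le 0$ on truncated ones), whereas $\gamma'-\hat\gamma=\gamma'\ge 0$ exactly on those truncated blocks, so $\langle\gamma^*-\hat\gamma,\gamma'-\hat\gamma\rangle\le 0$. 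Adding the two yields $\langle\gamma-\hat\gamma,\gamma'-\hat\gamma\rangle\le 0$ for all $\gamma'\in\mathcal{S}$, and strict convexity gives uniqueness. Fact \ref{fct:1} supplies the complementary within-block fact that the average is the unique minimizing constant value, since a constant shift $C$ adds $|B|C^2$ to that block's cost.

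The main obstacle is establishing property (b) from the definition of $L(\gamma,u)$, because the block length is specified indirectly as a minimum over $L$ of a universally quantified inequality rather than through its own prefix averages. I would argue by contradiction: if some within-block prefix of length $m<\ell=L(\gamma,a)$ had average exceeding $\bar B$, then minimality of $\ell$ forces the length-$m$ block condition to fail, producing an adjacent extension whose average is at least the prefix average; appending it keeps the extended prefix above $\bar B$, and iterating lengthens it. Either the growing over-average prefix stays inside the block, eventually filling a block whose average is $\bar B$ (contradiction), or it spills past the block boundary, where the defining inequality caps every post-block prefix average strictly below $\bar B$, forcing the within-block tail above $\bar B$ and hence the whole block above $\bar B$ (contradiction). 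Pinning down this extension loop is the delicate combinatorial step; the summation-by-parts estimate, the truncation decomposition, and the use of Fact \ref{fct:1} are then routine.
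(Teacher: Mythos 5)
Your proposal is correct, and it takes a genuinely different route from the paper's proof. The paper argues in two steps: first, an induction within each segmentation block showing that the projection must be constant on the block --- each inductive step rules out a strict decrease by constructing a modified candidate with strictly smaller objective (via Fact \ref{fct:1}) --- and second, an optimization of the per-block constant, which Fact \ref{fct:1} identifies as the truncated block average, with feasibility following from the decreasing block averages. You instead verify the obtuse-angle characterization of the projection directly for the candidate $\hat\gamma$: feasibility comes from your property (a), and the inequality $\langle\gamma-\hat\gamma,\,\gamma'-\hat\gamma\rangle\le 0$ splits into a block-mean-zero piece, killed by Abel summation against the non-positive centered partial sums of property (b) and the monotonicity of $\gamma'$ (the boundary term vanishes because $S_b=0$), plus an entrywise sign argument on the truncated blocks. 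Both proofs ultimately rest on the same combinatorial consequence of the minimality in the definition of $L(\gamma,u)$: the paper invokes the failure of the defining inequality at shorter lengths inside its inductive contradictions, while you isolate it once as property (b) (no within-block prefix average exceeds the block average) and prove it by the extension-iteration argument, which does terminate correctly --- the growing over-average prefix either fills the block exactly (average $\bar B$, contradiction) or spills over, in which case its post-block part is precisely a post-block prefix whose average the defining inequality caps strictly below $\bar B$ (contradiction); note the spill-over case cannot arise for the final block, where extensions are confined to the vector and the iteration must terminate at the block boundary. Your route is essentially the classical KKT-style correctness proof for pool-adjacent-violators: it is shorter once (a) and (b) are in hand, yields uniqueness for free, and cleanly separates the convex-analytic part from the combinatorial part; the paper's argument is more elementary, using nothing beyond the definition of the projection and Fact \ref{fct:1}, at the price of heavier inductive bookkeeping.
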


\begin{proof}

The proof consists of two steps. In the first step, we prove that for each segmentation block $B$, the projection of each coordinates share the same value. That is, $(\Pi_{\mathcal{S}}(\mathbf{\gamma}))_i = \mathcal{C}(B)$ as long as $\gamma_i \in B$. In the second step, we show that this constant is the mean of the block truncated at 0: $\mathcal{C}(B) = \max \left\{ \frac{1}{|B|}\sum_{\gamma_j \in B} \gamma_j, 0 \right\} $.

\textbf{Step 1} 
Without loss of generality, we consider $B = B(\mathbf{\gamma}, u)$.
We know from definition of blocks that $\forall 1 \leq l \leq L-1$, $\exists k_l$ s.t. $\frac{1}{k_l}\sum_{i=1}^{k_l} \gamma_{u+l-1+i} \geq \frac{1}{l}\sum_{i=1}^l \gamma_{u+i-1}$. We use induction to prove that $(\Pi_{\mathcal{S}}(\mathbf{\gamma}))_u = (\Pi_{\mathcal{S}}(\mathbf{\gamma}))_{u+l}$, $\forall 1 \leq l \leq L(\mathcal{\gamma},u)-1$. For $l = 1$, assume $(\Pi_{\mathcal{S}}(\mathbf{\gamma}))_u > (\Pi_{\mathcal{S}}(\mathbf{\gamma}))_{u+1}$. Consider two cases: (i) $(\Pi_{\mathcal{S}}(\mathbf{\gamma}))_{u} > \gamma_{u}$. (ii) $(\Pi_{\mathcal{S}}(\mathbf{\gamma}))_{u} \leq \gamma_{u}$. We now show that both cases lead to contradiction and hence do not hold. In case (i), we consider 
$$
(\widetilde{\Pi}_{\mathcal{S}}(\mathbf{\gamma}))_i = 
\begin{cases}
      \max\{\gamma_{u}, (\Pi_{\mathcal{S}}(\mathbf{\gamma}))_{u+1}\} & \text{if $i = u$ }\\
      (\Pi_{\mathcal{S}}(\mathbf{\gamma}))_{i} & \text{otherwise}
\end{cases}
$$
then obviously, 
$$
\left|(\widetilde{\Pi}_{\mathcal{S}}(\mathbf{\gamma}))_u - \gamma_{u}\right| < \left|({\Pi}_{\mathcal{S}}(\mathbf{\gamma}))_u - \gamma_{u}\right|
$$
which leads to that $\frac{1}{2}\|(\widetilde{\Pi}_{\mathcal{S}}(\mathbf{\gamma})) - \mathbf{\gamma}\|^2_2 < \frac{1}{2}\|({\Pi}_{\mathcal{S}}(\mathbf{\gamma})) - \mathbf{\gamma}\|^2_2$. This contradicts to the definition of projection. In case (ii), from definition of blocks we have that $\exists k_0 \geq 1$ s.t. $\frac{1}{k_0}\sum_{i=1}^{k_0} \gamma_{u+i} \geq \gamma_{u}$. Consider 
$$
(\widetilde{\Pi}_{\mathcal{S}}(\mathbf{\gamma}))_i = 
\begin{cases}
       (\Pi_{\mathcal{S}}(\mathbf{\gamma}))_{u} & \text{if $i \in \{u+1, \dots, u+k_0\}$ }\\
      (\Pi_{\mathcal{S}}(\mathbf{\gamma}))_{i} & \text{otherwise}
\end{cases}
$$
Notice that $\frac{1}{k_0}\sum_{i=1}^{k_0} \gamma_{u+i} \geq \gamma_{u} \geq (\Pi_{\mathcal{S}}(\mathbf{\gamma}))_{u} > (\Pi_{\mathcal{S}}(\mathbf{\gamma}))_{u+1}$, we have for $i \in \{u+1, \dots, u+k_0\}$, $\left|(\widetilde{\Pi}_{\mathcal{S}}(\mathbf{\gamma}))_i - ({\Pi}_{\mathcal{S}}(\mathbf{\gamma}))_i\right|$ is a constant independent of $i$ and that
$$
\left|(\widetilde{\Pi}_{\mathcal{S}}(\mathbf{\gamma}))_i - \frac{1}{k_0}\sum_{i=1}^{k_0} \gamma_{u+i}\right| < \left|({\Pi}_{\mathcal{S}}(\mathbf{\gamma}))_i - \frac{1}{k_0}\sum_{i=1}^{k_0} \gamma_{u+i}\right|
$$
According to Fact \ref{fct:1}, we define substitution for $i \in \{u+1, \dots, u+k_0\}$: $b_i = \frac{1}{k_0}\sum_{i=1}^{k_0} \gamma_{u+i}$, $a_i = \gamma_{u+i}$, $b_i + C_1 = (\widetilde{\Pi}_{\mathcal{S}}(\mathbf{\gamma}))_i$ and $b_i + C_2 = ({\Pi}_{\mathcal{S}}(\mathbf{\gamma}))_i$. Then since $|C_1 < C_2|$, we have $\frac{1}{2}\|(\widetilde{\Pi}_{\mathcal{S}}(\mathbf{\gamma})) - \mathbf{\gamma}\|^2_2 < \frac{1}{2}\|({\Pi}_{\mathcal{S}}(\mathbf{\gamma})) - \mathbf{\gamma}\|^2_2$, which contradicts to the definition of projection. 

Now assume the statement holds for $1 \leq l \leq l_0-1$, that is $(\Pi_{\mathcal{S}}(\mathbf{\gamma}))_u = \cdots = (\Pi_{\mathcal{S}}(\mathbf{\gamma}))_{u+l_0-1}$, we want to prove that $(\Pi_{\mathcal{S}}(\mathbf{\gamma}))_u = (\Pi_{\mathcal{S}}(\mathbf{\gamma}))_{u+l_0}$. Since the projection is on $\mathcal{S}$, by definition we know $(\Pi_{\mathcal{S}}(\mathbf{\gamma}))_u$ can never be smaller than $(\Pi_{\mathcal{S}}(\mathbf{\gamma}))_{u+l_0}$. We now assume $(\Pi_{\mathcal{S}}(\mathbf{\gamma}))_u > (\Pi_{\mathcal{S}}(\mathbf{\gamma}))_{u+l_0}$ and consider two cases: (i) $(\Pi_{\mathcal{S}}(\mathbf{\gamma}))_{u} > \frac{1}{l_0}\sum_{i=0}^{l_0-1} \gamma_{u+i}$. (ii) $(\Pi_{\mathcal{S}}(\mathbf{\gamma}))_{u} \leq \frac{1}{l_0}\sum_{i=0}^{l_0-1} \gamma_{u+i}$. To complete the proof, it suffices for us to show that neither of the cases can hold without contradictions. In case (i), we consider
$$
(\widetilde{\Pi}_{\mathcal{S}}(\mathbf{\gamma}))_i = 
\begin{cases}
      \max\{ \frac{1}{l_0}\sum_{j=0}^{l_0-1} \gamma_{u+j}, (\Pi_{\mathcal{S}}(\mathbf{\gamma}))_{u+l_0}\} \\
      \hfill \text{if $i \in \{u, \dots, u+l_0-1\}$ }\\
      (\Pi_{\mathcal{S}}(\mathbf{\gamma}))_{i} \hfill \text{otherwise}
\end{cases}
$$
then obviously for $i \in \{u, \dots, u+l_0-1\}$, $\left|(\widetilde{\Pi}_{\mathcal{S}}(\mathbf{\gamma}))_i - ({\Pi}_{\mathcal{S}}(\mathbf{\gamma}))_i\right|$ is a constant independent of $i$ and that 
$$
\left|(\widetilde{\Pi}_{\mathcal{S}}(\mathbf{\gamma}))_i - \frac{1}{l_0}\sum_{i=0}^{l_0-1} \gamma_{u+i}\right| < \left|({\Pi}_{\mathcal{S}}(\mathbf{\gamma}))_i - \frac{1}{l_0}\sum_{i=0}^{l_0-1} \gamma_{u+i}\right|
$$
According to Fact \ref{fct:1}, using the same substitution as that in analysis of $l=1$, we have that $\frac{1}{2}\|(\widetilde{\Pi}_{\mathcal{S}}(\mathbf{\gamma})) - \mathbf{\gamma}\|^2_2 < \frac{1}{2}\|({\Pi}_{\mathcal{S}}(\mathbf{\gamma})) - \mathbf{\gamma}\|^2_2$, which makes contradiction to the definition of projection. In case (ii), from definition of blocks we have that $\exists k_0 \geq 1$ s.t. $\frac{1}{k_0}\sum_{i=1}^{k_0} \gamma_{u+l_0-1+i} \geq \frac{1}{l_0}\sum_{i=0}^{l_0-1} \gamma_{u+i}$. Now we consider 
$$
(\widetilde{\Pi}_{\mathcal{S}}(\mathbf{\gamma}))_i = 
\begin{cases}
       (\Pi_{\mathcal{S}}(\mathbf{\gamma}))_{u} & \text{if $i \in \{u+l_0, \dots, u+l_0-1+k_0\}$ }\\
      (\Pi_{\mathcal{S}}(\mathbf{\gamma}))_{i} & \text{otherwise}
\end{cases}
$$
Notice that $\frac{1}{k_0}\sum_{i=1}^{k_0} \gamma_{u+l_0-1+i} \geq \frac{1}{l_0}\sum_{i=0}^{l_0-1} \gamma_{u+i} \geq (\Pi_{\mathcal{S}}(\mathbf{\gamma}))_{u} > (\Pi_{\mathcal{S}}(\mathbf{\gamma}))_{u+l_0}$, we have for $i \in \{u+l_0, \dots, u+l_0-1+k_0\}$, $\left|(\widetilde{\Pi}_{\mathcal{S}}(\mathbf{\gamma}))_i - ({\Pi}_{\mathcal{S}}(\mathbf{\gamma}))_i\right|$ is a constant independent of $i$ and that
$$
\begin{aligned}
\left|(\widetilde{\Pi}_{\mathcal{S}}(\mathbf{\gamma}))_i - \frac{1}{k_0}\sum_{i=0}^{k_0-1} \gamma_{u+l_0+i}\right| < \left|({\Pi}_{\mathcal{S}}(\mathbf{\gamma}))_i - \frac{1}{k_0}\sum_{i=0}^{k_0-1} \gamma_{u+l_0+i}\right|
\end{aligned}
$$
Again according to Fact \ref{fct:1}, we have $\frac{1}{2}\|(\widetilde{\Pi}_{\mathcal{S}}(\mathbf{\gamma})) - \mathbf{\gamma}\|^2_2 < \frac{1}{2}\|({\Pi}_{\mathcal{S}}(\mathbf{\gamma})) - \mathbf{\gamma}\|^2_2$, which contradicts to the definition of projection. This implies that it can never happen that $(\Pi_{\mathcal{S}}(\mathbf{\gamma}))_u > (\Pi_{\mathcal{S}}(\mathbf{\gamma}))_{u+l_0}$, which completes the induction. We have proved that $(\Pi_{\mathcal{S}}(\mathbf{\gamma}))_u = \cdots = (\Pi_{\mathcal{S}}(\mathbf{\gamma}))_{u+L(\mathbf{\gamma},u)-1} \overset{\Delta}{=} \mathcal{C}(B(u))$ for each segmentation block $B(u)$ of vector $\mathbf{\gamma}$.

\textbf{Step 2}
Now we already know that inside each segmentation block, the projection of each coordinate is a constant $\mathcal{C}(B)$, we now optimize the sequence $\{\mathcal{C}(B_i)\}_{i=1}^q$. According to Fact \ref{fct:1}, inside each $B_i$, the optimal constant (i.e. constant gives smallest $\ell_2$ error $\textup{argmin}_{C \geq 0} \frac{1}{2}\sum_{\gamma_j \in B_i}(\gamma_j - C)^2$) is : $\max \left\{ \frac{1}{|B_i|}\sum_{\gamma_j \in B_i} \gamma_j, 0 \right\}$. Meanwhile, it's feasible to set 
$$
(\Pi_{\mathcal{S}}(\mathbf{\gamma}))_i = \max \left\{ \frac{1}{|B|}\sum_{\gamma_j \in B} \gamma_j, 0 \right\}
$$
since we have that $\max \left\{ \frac{1}{|B_i|}\sum_{\gamma_j \in B_i} \gamma_j, 0 \right\} \geq \max \left\{ \frac{1}{|B_{i+1}|}\sum_{\gamma_j \in B_{i+1}} \gamma_j, 0 \right\}$ by definition of blocks. This wraps up the proof.

\end{proof}

\subsection{Proof of Theorem \ref{pgd}}
We next prove the validity of Algorithm \ref{alg: projection}.

\begin{proof}
Suppose $\mathbf{\gamma}$ has segmentation blocks $B_1, \dots, B_q$, we firstly prove that $(\Lambda_{\mathcal{S}}(\mathbf{\gamma}))_i = (\Pi_{\mathcal{S}}(\mathbf{\gamma}))_{i}$ for $i \leq |B_1|$. We let $\gamma_j(t)$ denote the value of $\gamma_j$ at the moment $i$ was assigned from $t$ to $t+1$ in Algorithm \ref{alg: projection} (i.e. the time when first $t$ iterations are finished). We also let $\gamma_j(0)$ denote the initial value of $\gamma_j$ in the input. Then clearly $(\Lambda_{\mathcal{S}}(\mathbf{\gamma}))_j = \max\{\gamma_j(p),0\}$. During the value-averaging step, the algorithm is constantly transporting values from elements with larger index to those with smaller. Hence it's straightforward to see that
\begin{equation} \label{thm2:1}
    \sum_{j=1}^{J}\gamma_j(t) \geq \sum_{j=1}^{J}\gamma_j(t-1)
\end{equation}
for arbitrary $J, t \in \{1, \dots, p\}$. First assume $\gamma_1(p) = \cdots = \gamma_{\widetilde{L}_1}(p) > \gamma_{\widetilde{L}_1+1}(p)$. Since Algorithm \ref{alg: projection} only involves averaging values among subsequences, we have that $\sum_{j=1}^p \gamma_j(p) = \sum_{j=1}^p \gamma_j$. Moreover since $\gamma_{\widetilde{L}_1}(p) > \gamma_{\widetilde{L}_1+1}(p)$, there's no value-averaging steps between any one of the first $\widetilde{L}_1$ elements and one of the rest elements. This implies \begin{equation}\label{thm2:2}
\sum_{j=1}^{\widetilde{L}_1}\gamma_j(p) = \sum_{j=1}^{\widetilde{L}_1} \gamma_j
\end{equation}
By definition of blocks, we know that $\exists k$ such that $\frac{1}{k}\sum_{i=1}^{k} \gamma_{\widetilde{L}_1+i} \geq \frac{1}{\widetilde{L}_1}\sum_{i=1}^{\widetilde{L}_1} \gamma_{i} = \gamma_1(p)$. By (\ref{thm2:1}) we have that 
$$
\frac{1}{k}\sum_{i=1}^{k} \gamma_{\widetilde{L}_1+i} \leq \frac{1}{k}\sum_{i=1}^{k} \gamma_{\widetilde{L}_1+i}(p) \leq \gamma_{\widetilde{L}_1+1}(p)
$$
Together with above, this implies that $\gamma_{1}(p) \leq \gamma_{\widetilde{L}_1+1}(p)$, which contradicts to the assumption. Hence we have that $\widetilde{L}_1 \geq L_1$.

On the other hand, if $\widetilde{L}_1 > L_1$, then at the moment $i$ is assigned to be $\widetilde{L}_1 + 1$ in the algorithm (i.e. the time when first $\widetilde{L}_1$ iterations are finished), we must have that 
$$
\frac{\sum_{j=1}^{\widetilde{L}_1}\gamma_j(\widetilde{L}_1-1)}{\widetilde{L}_1} \geq \frac{\sum_{j=1}^{{L}_1}\gamma_j(\widetilde{L}_1-1)}{{L}_1}
$$
This implies that 
\begin{equation} \label{thm2:3}
    \frac{\sum_{j=L_1+1}^{\widetilde{L}_1}\gamma_j(\widetilde{L}_1-1)}{\widetilde{L}_1 - L_1} \geq \frac{\sum_{j=1}^{{L}_1}\gamma_j(\widetilde{L}_1-1)}{{L}_1}
\end{equation}
By (\ref{thm2:1}) we have
\begin{equation} \label{thm2:4}
\frac{\sum_{j=1}^{{L}_1}\gamma_j}{{L}_1} \leq \frac{\sum_{j=1}^{{L}_1}\gamma_j(\widetilde{L}_1-1)}{{L}_1}
\end{equation}
Meanwhile at $t = \widetilde{L}_1-1$, the sum of first $L_1$ terms is the same as that in $\mathbf{\gamma}$. This implies
\begin{equation} \label{thm2:5}
\begin{aligned}
    \sum_{j = L_1+1}^{\widetilde{L}_1}\gamma_j(\widetilde{L}_1-1) & = \sum_{j=1}^{L_1}\gamma_j + \sum_{j=L_1+1}^{\widetilde{L}_1}\gamma_j - \sum_{j = 1}^{{L}_1}\gamma_j(\widetilde{L}_1-1) \\
    & \leq \sum_{j=L_1+1}^{\widetilde{L}_1}\gamma_j
\end{aligned}
\end{equation}
where the last inequality is given by (\ref{thm2:1}). Combining (\ref{thm2:3}), (\ref{thm2:4}) and ((\ref{thm2:5})) yields
$$
\frac{\sum_{j=L_1+1}^{\widetilde{L}_1}\gamma_j}{\widetilde{L}_1 - L_1} \geq \frac{\sum_{j=1}^{{L}_1}\gamma_j}{{L}_1}
$$
This contradicts to definition of $L_1$ in (\ref{L}). Hence we have that $\widetilde{L}_1 = L_1$. This means $\gamma_1(p) = \cdots = \gamma_{{L}_1}(p) > \gamma_{{L}_1+1}(p)$. Recall that $(\Lambda_{\mathcal{S}}(\mathbf{\gamma}))_j = \max\{\gamma_j(p),0\}$, this together with (\ref{thm2:2}) yields
$$
\begin{aligned}
(\Pi_{\mathcal{S}}(\mathbf{\gamma}))_1 & = \max \left\{ \frac{1}{|B_1|}\sum_{j = 1}^{L_1} \gamma_j, 0 \right\} = (\Lambda_{\mathcal{S}}(\mathbf{\gamma}))_1 \\
& = \cdots = (\Lambda_{\mathcal{S}}(\mathbf{\gamma}))_{{L}_1} > (\Lambda_{\mathcal{S}}(\mathbf{\gamma}))_{{L}_1+1}
\end{aligned}
$$
Now we have prove that $(\Pi_{\mathcal{S}}(\mathbf{\gamma}))_i = (\Lambda_{\mathcal{S}}(\mathbf{\gamma}))_i$ for $i \leq |B_1|$ and that there is no interaction between element in $B_1$ and that outside $B_1$. This implies that the existence of $B_1$ does \textit{not} affect the rest of output values $(\Lambda_{\mathcal{S}}(\mathbf{\gamma}))_{i > |B_1|}$. Hence we can ignore $B_1$ and repeat exactly the same procedure to prove that $(\Pi_{\mathcal{S}}(\mathbf{\gamma}))_i = (\Lambda_{\mathcal{S}}(\mathbf{\gamma}))_i$ when $|B_1|+1 \leq i \leq |B_2|$ and that there is no interactions between element in $B_2$ and that outside $B_2$. Iteratively we can prove $\Pi_{\mathcal{S}}(\mathbf{\gamma}) = \Lambda_{\mathcal{S}}(\mathbf{\gamma})$

\end{proof}

\end{document}